\newcounter{change}
\newtheorem{defn}{Definition}
\newtheorem{lem}[defn]{Lemma}
\newtheorem{prop}[defn]{Proposition}
\newtheorem{assum}[defn]{Assumption}
\newtheorem{thm}[defn]{Theorem}
\newtheorem{problem}[defn]{Problem}
\providecommand{\R}{\ensuremath \mathbb{R}}
\providecommand{\N}{\ensuremath \mathbb{N}}
\newcommand{\Xplan}{P}
\newcommand{\Xgoi}{Q}
\newcommand{\K}{K}
\newcommand{\D}{S} % Disturbance
\newcommand{\Dfun}{\ensuremath \mathbb{S}} % Disturbance
\newcommand{\Xgoal}{G}
\newcommand{\Eset}{E}
\newcommand{\Rbrs}{\Omega}
\newcommand{\Obs}{\mathcal{O}}
\newcommand{\Poly}{V}
\newcommand{\Qoly}{U}
\newcommand{\Ravd}{\Lambda}
\newcommand{\Obsbrs}{B}
\newcommand{\regtext}[1]{\mathrm{\textnormal{#1}}}
\newcommand{\vc}[1]{\mathbf{#1}}
\newcommand{\lbl}[1]{_{\regtext{#1}}}
\newcommand{\itv}[1]{\tilde{#1}}
\newcommand{\zeros}{\mathbf{0}}
\newcommand{\eye}{\mathbf{I}}
\newcommand{\proj}[2][]{\regtext{proj}_{#1}\!\left(#2\right)}
\newcommand{\projtext}[1]{\regtext{proj}_{#1}}
\newcommand{\abs}[1]{\left\vert#1\right\vert}
\newcommand{\union}{\bigcup}
\newcommand{\tp}{^\intercal} %appearantly the better way to do transpose
\newcommand{\dyn}{\vc{f}}
\newcommand{\final}{\lbl{f}}
\newcommand{\hpoly}{\mathcal{H}}
\newcommand{\ahpoly}{\mathcal{AH}}
\newcommand{\conv}[1]{\regtext{conv}\!\left(#1\right)}
\newcommand{\rpm}[1]{\regtext{RPM}\!\left(#1\right)}
\newcommand{\brsfunction}{\mathcal{B}}
\newcommand{\brs}[2][]{{\brsfunction}_{#1}\!\left(#2\right)}
\newcommand{\pwafunction}{\psi}
\newcommand{\pwa}[2][]{\vc{\pwafunction}_{#1}\!\left(#2\right)}
\newcommand{\nn}{\vc{\xi}}
\newcommand{\inset}{X}
\newcommand{\depth}{d}
\newcommand{\idxo}{\idx{0}}
\newcommand{\Wt}{\vc{W}}
\newcommand{\bias}{\vc{w}}
\newcommand{\ts}{t}
\newcommand{\is}{i}
\newcommand{\js}{j}
\newcommand{\ps}{p}
\newcommand{\ds}{d}
\newcommand{\gams}{\gamma}
\newcommand{\sap}{s}
\newcommand{\tf}{\ts\final}
\newcommand{\es}{e}
\newcommand{\xv}{\vc{x}}
\newcommand{\yv}{\vc{y}}
\newcommand{\dv}{\vc{s}} % Disturbance
\newcommand{\planv}{\vc{p}}
\newcommand{\goiv}{\vc{q}}
\newcommand{\paramv}{\vc{k}}
\newcommand{\Acon}{\vc{A}}
\newcommand{\bcon}{\vc{b}}
\newcommand{\Ccon}{\vc{C}}
\newcommand{\dcon}{\vc{d}}
\newcommand{\px}{p_x}
\newcommand{\py}{p_y}
\newcommand{\pxg}{p_{x, g}}
\newcommand{\pyg}{p_{y, g}}
\newcommand{\pydes}{y_{\regtext{des}}}
\newcommand{\vdes}{v_{\regtext{des}}}
\newcommand{\pth}{\theta}
\newcommand{\vel}{v}
\newcommand{\sideslip}{\beta}
\newcommand{\ndim}{{n}}
\newcommand{\mdim}{{m}}
\newcommand{\pdim}{{p}}
\newcommand{\qdim}{{q}}
\newcommand{\nparam}{\ndim_k}
\newcommand{\nplan}{\ndim_p}
\newcommand{\npwa}{\ndim\lbl{PWA}}
\newcommand{\nhp}{\ndim\lbl{h}}
\newcommand{\nobs}{\ndim_{\Obs}}
\newcommand{\idx}[1]{_{#1}} % index anything
\newcommand{\sample}{\regtext{sample}}
\title{\LARGE \bf
Guaranteed Reach-Avoid for Black-Box Systems through \\ Narrow Gaps via Neural Network Reachability
}
\author{Long Kiu Chung$^{1}$,
Wonsuhk Jung$^{1}$,
Srivatsank Pullabhotla$^{1}$,
Parth Shinde$^{1}$,
Yadu Sunil$^{1}$,\\
Saihari Kota$^{1}$,
Luis Felipe Wolf Batista$^{2}$,
C\'edric Pradalier$^{2}$,
and Shreyas Kousik$^{1}$
\thanks{
$^{1}$Georgia Institute of Technology, Atlanta, GA.
$^{2}$Georgia Tech Europe, Metz, France.
This work was supported by the Georgia Tech AIMPF.
\textbf{Corresponding author: }\texttt{lchung33@gatech.edu}.
\textbf{Website: }\href{https://saferoboticslab.me.gatech.edu/research/neuralparc/}{https://saferoboticslab.me.gatech.edu/research/neuralparc/}.
\textbf{GitHub: }\href{https://github.com/safe-robotics-lab-gt/NeuralPARC}{https://github.com/safe-robotics-lab-gt/NeuralPARC}.
}}
\begin{document}

\maketitle

\begin{abstract}
In the classical \textit{reach-avoid} problem, autonomous mobile robots are tasked to \textit{reach} a goal while \textit{avoiding} obstacles.
However, it is difficult to provide guarantees on the robot's performance when the obstacles form a \textit{narrow gap} and the robot is a \textit{black-box} (i.e. the dynamics are not known analytically, but interacting with the system is cheap).
To address this challenge, this paper presents NeuralPARC.
The method extends the authors' prior Piecewise Affine Reach-avoid Computation (PARC) method to systems modeled by rectified linear unit (ReLU) neural networks, which are trained to represent parameterized trajectory data demonstrated by the robot.
NeuralPARC computes the reachable set of the network while accounting for modeling error, and returns a set of states and parameters with which the black-box system is guaranteed to reach the goal and avoid obstacles.
NeuralPARC is shown to outperform PARC, generating provably-safe extreme vehicle drift parking maneuvers in simulations and in real life on a model car, as well as enabling safety on an autonomous surface vehicle (ASV) subjected to large disturbances and controlled by a deep reinforcement learning (RL) policy.

\end{abstract}

\section{Introduction} \label{sec:intro}

Many important mobile robot planning and control problems involve systems that are difficult or even impossible to model analytically \cite{nguyen2011model}.
That said, interactions with a system may be cheap, such that a dataset of the systems' input, control, and output signals is readily available---i.e., a \textit{black-box} system.
Data availability makes such systems inviting for learning-based control, but it is challenging to certify that learning-based motion planning or control will operate safely \cite{liu2021algorithms}.
Of course, ensuring collision avoidance is straightforward if a robot moves overly cautiously or remains stopped, so we also desire \textit{liveness}.

We define safety and liveness through the common framework of a \textit{reach-avoid} problem, where an agent must navigate to a set of goal states without colliding with obstacles.
In particular, we are interested in the case where obstacles form \textit{narrow gaps}; in this setting, it is especially difficult to maintain guarantees because solutions should be conservative to account for errors in estimation and performance, but cannot be overly conservative such that solutions cannot be found \cite{chung2024goal}.
In this paper, we present a method that returns a set of safe initial states and parameters, with which a black-box, autonomous mobile robot is guaranteed to \textit{reach} and \textit{avoid} in narrow-gap scenarios.
An overview and two examples of our approach are shown in Fig.~\ref{fig:front_figure}.

\begin{figure}[t]
\centering
    \includegraphics[width=1\columnwidth]{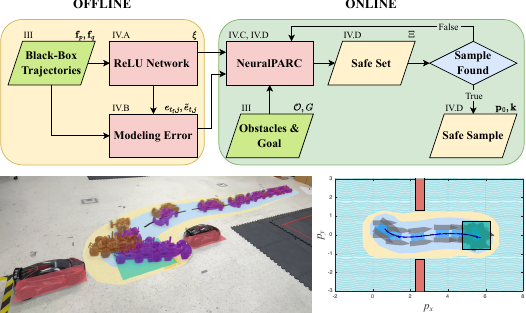}
\caption{
(Top) A flowchart of our Neural Piecewise Affine Reach-avoid Computation (NeuralPARC) method labelled with relevant paper sections and symbols.
We test NeuralPARC on (bottom left) extreme vehicle drift parking and (bottom right) an autonomous surface vehicle (ASV) controlled by deep reinforcement learning (RL) and subject to large disturbances.
In each example, a timelapse of the realized motion of the agents are shown.
The blue tube is the overapproximation of the agent's body as a circle swept across NeuralPARC's predicted trajectory (dashed line), and the yellow tube represents NeuralPARC's modeling error bounds.
We denote the actual trajectory of the ASV with a solid line, and show two timelapses (orange and purple) of the drifting vehicle following the \textit{same} motion plan.
Despite the large variance in the robots' tracking performance, NeuralPARC \textit{always} guarantees the agents to reach the green goal and avoid the red obstacles.
}\label{fig:front_figure}
\vspace*{-0.5cm}
\end{figure}

\subsection{Related Work}

% We review four key approaches to solving reach-avoid problems for black-box systems: neural network verification, safe reinforcement learning (RL), certificate functions, and sampling-based motion planning.
% We then summarize our prior work on reach-avoid \cite{chung2024goal}.

To solve reach-avoid problems for black-box systems, a popular direction is to train \textit{neural networks} to model the unknown black-box dynamics using the observation data, then verify the networks using reachability analysis \cite{tran2019star, tran2020nnv, tran2020verification, chung2021constrained, althoff2010reachability, everett2020robustness,liu2021algorithms}.
However, these methods typically require the designer to propose new policies through trial-and-error if reach-avoid guarantees cannot be verified on the tested policy.

Safe RL techniques address these challenges by creating a reward system that incentivizes reaching the goal while penalizing safety violations throughout the learning and deployment phases \cite{achiam2017constrained, schulman2015trust, schulman2017proximal,selim2022safe}.
However, pure safe RL methods can have poor safety and liveness rates in practice \cite{qin2022sablas}.

A recent branch of work has been focused on learning certificate functions such as Control Lyapunov Functions (CLF) and Control Barrier Functions (CBF) using neural networks \cite{qin2022sablas, dai2021lyapunov, zhang2023data}.
Though certificate functions are traditionally successful in systems with known dynamics \cite{ames2019control, freeman2008robust, choi2021robust, qin2021learning}, their guarantees are not maintained for black-box agents due to the inherent approximation errors in deep learning.

Finally, while planning a black-box agent through narrow gaps is an active area of research in sampling-based motion planning \cite{elbanhawi2014sampling, orthey2024multilevel, liu2023simultaneous, yu2023gaussian}, challenges arise when the dynamics are too nonlinear, the system is too high-dimensional, or when the gap is too tight, such that sampling from the small set of feasible solution while maintaining kinodynamic feasibility is too difficult \cite{wu2020r3t, berenson2009manipulation}.
As such, these methods often have long computational time or are very conservative in approximating the robot's motion.

Our prior work on Piecewise Affine Reach-avoid Computation (PARC) addressed the reach-avoid problem in narrow-gap scenarios \cite{chung2024goal}.
PARC was shown to outperform sampling-based motion planning, certificate function, and other reachability methods by achieving low conservativeness using piecewise affine (PWA) systems and H-polytopes to model trajectories and reachable sets. 
However, in PARC, the construction of the trajectory model requires intuitive knowledge about the system dynamics.
Moreover, PARC requires a nominal, goal-reaching trajectory to begin analysis, and is only capable of analyzing reach-avoid guarantees with respect to the subset of the PWA system corresponding to this nominal plan.
As such, we propose NeuralPARC, an extension to PARC that retains its advantages but not its weaknesses by using neural networks for modeling and verification.

\subsection{Contributions}

In this paper, we propose NeuralPARC, which improves upon PARC in three ways:

\begin{enumerate}[wide]
    \item Instead of carefully hand-crafting a trajectory model, NeuralPARC learns a trajectory model with rectified linear unit (ReLU) neural networks from system trajectory data.
    This data-driven approach enables NeuralPARC to operate on \textit{black-box} models, without requiring intuitive knowledge about the system dynamics to uphold performance.

    \item By exploiting Reachable Polyhedral Marching (RPM) \cite{vincent2021reachable}, NeuralPARC is certified to eventually explore \textit{all} affine dynamics generatable by the ReLU network trajectory model, whereas PARC can only analyze \textit{one} affine dynamical system given a nominal goal-reaching plan.
    Thus, NeuralPARC do not require a nominal plan to begin analysis, and can generate more diverse safe trajectories than PARC.

    \item As ``universal approximators'' \cite{hornik1989multilayer}, neural networks enable NeuralPARC to attain lower modelling errors and tighter approximations than PARC.
    We demonstrate this in simulations and on hardware with extreme vehicle drift parking maneuvers and an autonomous surface vehicle (ASV) agent under large disturbances and controlled by deep RL.
\end{enumerate}
\section{Preliminaries} \label{sec:preliminary}

We now introduce our notation for H-polytopes, AH-polytopes, PWA systems, and ReLU neural networks.

\subsection{Set Representations}

In this work, we represent most sets as either H-polytopes or AH-polytopes, which have extensive algorithm and toolbox support \cite{herceg2013multi, althoff2015introduction, sadraddini2019linear}. 

\subsubsection{H-Polytopes}

An $\ndim$-dimensional H-polytope $\hpoly(\Acon, \bcon) \subset \R^{\ndim}$ is a closed convex set parameterized by $\nhp$ linear constraints $\Acon \in \R^{\nhp \times \ndim}$ and $\bcon \in \R^{\nhp}$ as $\hpoly(\Acon, \bcon) = \left\{\xv\ |\ \Acon\xv\leq\bcon\right\}$.
Its emptiness $\hpoly(\Acon, \bcon) = \emptyset$ can be checked with a single linear program (LP).
We use their closed-form representations in intersections $\cap$ and Cartesian products $\times$, and compute their Minkowski sum $\oplus$ by projection and their Pontryagin difference by solving an LP for each constraint in the subtracted polytope \cite{herceg2013multi}.
We note that closed-form expressions of Minkowski sum and Pontryagin difference exist if all H-polytopes involved are hyperrectangles \cite{forets2021lazysets}.

\subsubsection{AH-Polytopes}

An $\mdim$-dimensional AH-polytope $\ahpoly(\Acon, \bcon, \Ccon, \dcon) \subset \R^{\mdim}$ is a closed, convex set parameterized by an $\ndim$-dimensional H-polytope $\hpoly(\Acon, \bcon)$ and an affine map defined by $\Ccon \in \R^{\mdim \times \ndim}$ and $\dcon \in \R^{\mdim}$ as 
\begin{align}
    \ahpoly(\Acon, \bcon, \Ccon, \dcon) = \{\Ccon\xv + \dcon\ |\ \xv \in \hpoly(\Acon, \bcon)\}.
\end{align}
AH-polytopes enable closed-form intersections $\cap$ and convex hulls $\conv{\cdot}$.
The projection of an $\ndim$-dimensional H-polytope into its first $\mdim$ dimensions $\proj[\mdim]{\cdot}$ also has a closed-form expression as an AH-polytope.
Finally, both the point containment $\yv \in \ahpoly(\Acon, \bcon, \Ccon, \dcon)$ and emptiness $\ahpoly(\Acon, \bcon, \Ccon, \dcon) = \emptyset$ of an AH-polytope can each be checked with one LP \cite{sadraddini2019linear}.

\subsection{PWA Systems}

A PWA system is a continuous function $\pwafunction: \inset \to \R^{\mdim}$ with output $\yv = \pwa{\xv} \in \R^{\mdim}$ given an input $\xv \in \inset \subset \R^{\ndim}$.
This function is defined by the collection of $\npwa$ affine map tuples $\{(\hpoly(\Acon_1, \bcon_1), \Ccon_1, \dcon_1), \cdots, (\hpoly(\Acon_{\npwa}, \bcon_{\npwa}), \Ccon_{\npwa}, \dcon_{\npwa})\}$, such that
\begin{align}
    \pwa{\xv} &= \Ccon_\is \xv + \dcon_\is \quad \forall \xv \in \hpoly(\Acon_\is, \bcon_\is), \is = 1, \cdots, \npwa,
\end{align}
where $\Acon_\is \subset \R^{{\nhp}_{\is} \times \ndim}$, $\bcon_\is \subset \R^{{\nhp}_{\is}}$, $\Ccon_\is \subset \R^{\mdim \times \ndim}$, $\dcon_\is \subset \R^\mdim$.
We refer to each H-polytope $\hpoly(\Acon_\is, \bcon_\is)$ as a PWA region.

For the PWA system to be well-defined and continuous, we require
$\inset = \union_{\is = 1}^{\npwa} \hpoly(\Acon_{\is}, \bcon_{\is})$,
such that the input domain $\inset$ is \textit{tessellated} by the PWA regions, and that $\Ccon_\is \xv + \dcon_\is = \Ccon_\js \xv + \dcon_\js \forall \xv \in \hpoly(\Acon_\is, \bcon_\is) \cap \hpoly(\Acon_\js, \bcon_\js), \is, \js \in \{1, \cdots, \npwa\}$.
We refer to each pair of $\hpoly(\Acon_\is, \bcon_\is)$, $\hpoly(\Acon_\js, \bcon_\js)$ where $\exists \xv \in \hpoly(\Acon_\is, \bcon_\is) \cap \hpoly(\Acon_\js, \bcon_\js), \is, \js \in \{1, \cdots, \npwa\}$ as neighboring PWA regions.

Finally, per \cite{chung2024goal, vincent2021reachable}, the preimage $\brs{\cdot}$ of an H-polytope $\hpoly(\Acon, \bcon)$ through an affine map tuple $(\hpoly(\Acon_\is, \bcon_\is), \Ccon_\is, \dcon_\is)$ is:
\begin{subequations}
\begin{align}
    &\brs{\hpoly(\Acon, \bcon), (\hpoly(\Acon_\is, \bcon_\is), \Ccon_\is, \dcon_\is)} = \hpoly\left(\begin{bmatrix}
        \Acon\Ccon_{\is}\\
        \Acon_{\is}
    \end{bmatrix}, \begin{bmatrix}
        \bcon - \Acon\dcon_{\is}\\
        \bcon_{\is}
    \end{bmatrix}\right).
\end{align}
\end{subequations}

\subsection{ReLU Neural Networks and RPM}

In this work, we consider a fully connected, ReLU-activated feedforward neural network $\nn:\inset\to\R^{\mdim}$, with output $\yv = \nn(\xv) \in \R^\mdim$ given an input $\xv = \xv\idxo \in \inset \subset \R^{\ndim\idxo} = \R^\ndim$.
We denote by $\depth \in \N$ the \textit{depth} of the network and by $\ndim\idx{\is}$ the \textit{width} of the $\is^{\regtext{th}}$ layer.
Mathematically,
\begin{subequations}
\begin{align}
    \xv\idx{\is} &= \max\left(\Wt\idx{\is}\xv\idx{\is-1} + \bias\idx{\is}, \zeros\right),\\
    \yv &= \Wt\idx{\depth}\xv\idx{\depth-1} + \bias\idx{\depth},
\end{align}
\end{subequations}
where $\Wt\idx{\is} \in \R^{\ndim\idx{\is}\times \ndim\idx{\is-1}}$, $\bias\idx{\is} \in \R^{\ndim\idx{\is}}$, $\is=1,\cdots,\depth-1$, $\Wt\idx{\ds} \in \R^{\mdim\times \ndim\idx{\ds-1}}$, $\bias\idx{\ds} \in \R^{\mdim}$, and max is taken elementwise.

A ReLU neural network with this structure is \textit{equivalent} to a PWA system \cite{montufar2014number, hanin2019universal, arora2016understanding}.
Given a ReLU neural network $\nn$, we use the RPM algorithm \cite{vincent2021reachable} to obtain the affine map tuples of the equivalent PWA system.
For an input seed $\xv$, the first iteration of RPM returns:
\begin{align}\label{eq:first_map}
    \rpm{\nn, \xv, 1} &= (\hpoly(\Acon_{\sap_1}, \bcon_{\sap_1}), \Ccon_{\sap_1}, \dcon_{\sap_1}),
\end{align}
where $\xv \in \hpoly(\Acon_{\sap_1}, \bcon_{\sap_1})$, $\sap_1 \in \{1, \cdots, \npwa\}$.

For subsequent iterations $\is$, $\is = 2, \cdots, \npwa$, RPM returns:
\begin{align}
    \rpm{\nn, \xv, \is} &= (\hpoly(\Acon_{\sap_\is}, \bcon_{\sap_\is}), \Ccon_{\sap_\is}, \dcon_{\sap_\is}),
\end{align}
where $\sap_\is \in \{1, \cdots, \npwa\}$, $\sap_1 \neq \cdots \neq \sap_{\npwa}$, and $\hpoly(\Acon_{\sap_\is}, \bcon_{\sap_\is})$ is a neighboring PWA region of $\hpoly(\Acon_{\sap_\js}, \bcon_{\sap_\js})$, $\js\in\{1, \cdots, \is-1\}$.

Thus, RPM will eventually discover all the PWA regions and affine maps within the input domain, but can be terminated early if a region with some desired properties has been found.
RPM requires only basic matrix operations and solving LPs.
See \cite{vincent2021reachable} for detailed discussions of RPM.
\section{Problem Formulation} \label{subsec:blackbox_reachavoid}
In this paper, we consider the reach-avoid problem on trajectories realized by a black-box system using parameterized policies.
The trajectories are in the form of:
\begin{align}\label{eq:blackboxsys}
    \planv(\ts) &= \dyn_\pdim(\planv_0, \paramv, \ts, \dv),\ 
    \goiv_{\tf} = \dyn_\qdim(\paramv, \dv)\regtext{, and}\ 
    \planv(0) = \planv_0,
\end{align}
where $\planv \in \R^{\ndim_\ps}$ is the \textit{workspace state} (usually, $\ndim_\ps = 2$ or $3$), $\planv_0 \in \Xplan_0 \subset \R^{\ndim_\ps}$ is the initial workspace state, $\paramv \in \K \subset \R^{\nparam}$ represents \textit{trajectory parameters} (e.g. initial heading angle, desired goal position), $\ts \in [0, \tf]$ is time, $\tf\in\R^{+}$ is the final time, $\dv(\cdot)\in\Dfun:=\{\phi:[0, \tf]\to\D\}$ is the disturbance function, and $\goiv_{\tf} \in \Xgoi \subset \R^{\ndim_\qdim}$ are goal states of interest other than workspace (e.g. final heading angle, trajectory cost).

We assume $\dyn_\pdim$ and $\dyn_\qdim$ are continuous and \textit{black-box}, meaning we do not know their analytic expressions, but can observe the function's outputs by providing $\planv_0, \paramv, \ts$, and $\dv$ offline.
We also assume $\D$ and $\Xgoi$ are compact, $\Xplan_0$ and $\K$ are H-polytopes, and $0 \in \Xplan_0$.
Finally, we require $\dyn_\pdim$ to be \textit{translation invariant} in the workspace:
\begin{assum}[Translation Invariance in Workspace]
\label{ass:eti}
    Under the same $\paramv$, $\ts$, and $\dv$, changing $\planv_0$ translates the resulting trajectory by the same amount in the workspace:
    \begin{align}
        \dyn_\pdim(\planv_0, \paramv, \ts, \dv) = \dyn_\pdim(0, \paramv, \ts, \dv) + \planv_0.
    \end{align}
\end{assum}
\noindent This assumption is needed to enable computation of reachable sets for obstacle avoidance in workspace.

We denote the obstacles as $\Obs = \union_{\is = 1}^{\nobs} \Obs_\is \subset \R^{\ndim_\ps}$, and the goal set as $\Xgoal \subset \R^{\ndim_\ps} \times \Xgoi$.
$\Xgoal$ and each $\Obs_\is$ are represented as H-polytopes.
We define the reach-avoid problem as follows.
\begin{problem} [Backward Reach-Avoid Set (BRAS) for Black-Box Systems]\label{prob:bras}
    Given a black-box system \eqref{eq:blackboxsys}, $\Xgoal$, $\Obs$, $\tf$, $\Xplan_0$, and $\K$, find the BRAS $\Xi$, a set of initial workspace states in $\Xplan_0$ and trajectory parameters in $\K$ with which the robot reaches $\Xgoal$ at time $\tf$ without colliding with $\Obs$:
    \begin{equation}\label{eq:true_bras}
    \begin{split}
        \Xi(\tf, \Xgoal, \Obs) \subset
        \{&(\planv_0, \paramv) \in \Xplan_0 \times \K\ | \begin{bmatrix}\dyn_\pdim(\planv_0, \paramv, \tf, \dv)\\\dyn_\qdim(\paramv, \dv)\end{bmatrix} \in \Xgoal, \\
        &\dyn_\pdim(\planv_0, \paramv, \ts, \dv) \notin \Obs,\dyn_\pdim(\planv_0, \paramv, 0, \dv) = \planv_0,\\
        &\forall \ts\in[0, \tf], \dv(\cdot)\in\Dfun\}.
    \end{split}
    \end{equation}
\end{problem}
\noindent Offline, we assume we know $\tf$, $\Xplan_0$, and $\K$, and are allowed to interact with \eqref{eq:blackboxsys}.
We only obtain $\Xgoal$ and $\Obs$ online.
\section{Proposed Method} \label{sec:method}

We now detail our approach to solving Problem \ref{prob:bras}.
Offline, we build a trajectory model of the black-box system using ReLU neural networks.
We then compute an upper bound of the modeling error by repeatedly interacting with the black-box system.
Online, we use RPM \cite{vincent2021reachable} to convert the neural network model into a PWA system, and use NeuralPARC to incorporate the modeling error and compute the BRAS.

\subsection{Learning the Trajectory Model (Offline)}
The key idea of NeuralPARC is to estimate and represent the black-box system's trajectories in a more analyzable form without losing representation power and without requiring knowledge about the system dynamics.
Our key insight is to accomplish this using ReLU neural networks.
Offline, we uniformly sample $\paramv$ from $\K$, and for each sample, collect $\planv(\ts)$ and $\goiv_{\tf}$ from \eqref{eq:blackboxsys}, with $\planv_0 = 0$, a random $\dv(\cdot)\in\Dfun$, and $\ts = \Delta\ts, \cdots, \tf$, where $\Delta\ts$ is the timestep, $0 < \Delta\ts < \tf$ and $\tf~\regtext{mod}~\Delta\ts = 0$.
Then, we train a ReLU neural network $\nn$ with features $(\paramv)$ and labels $(\planv(\Delta\ts), \cdots, \planv(\tf), \goiv_{\tf})$ in a supervised learning framework.
From translation invariance, we have
% \begin{align} \label{eq:traj_model}
%     \nn(\paramv) + \begin{bmatrix}
%         \planv_0\\
%         \vdots\\
%         \planv_0\\
%         \zeros
%     \end{bmatrix} = 
%     \begin{bmatrix}
%         \hat{\planv}(\Delta\ts)\\
%         \vdots\\
%         \hat{\planv}(\tf)\\
%         \hat{\goiv}_{\tf}
%     \end{bmatrix},
% \end{align}
\begin{align} \label{eq:traj_model}
    \nn(\paramv) + [\planv_0\tp, \cdots, \planv_0\tp, \zeros]\tp = [\hat{\planv}(\Delta\ts)\tp, \cdots, \hat{\planv}(\tf)\tp, \hat{\goiv}_{\tf}\tp]\tp,
\end{align}
where $\hat{\planv}(\ts) \in \R^{\ndim_\ps}$ is the estimation of $\dyn_\pdim(\planv_0, \paramv, \ts, \dv)$, and $\hat{\goiv}_{\tf} \in \R^{\ndim_\qdim}$ is the estimation of $\dyn_\qdim(\paramv, \dv)$.
By definition, $\hat{\planv}(0) = \planv_0$.
We denote $\hat{\planv}(\ts)$ and $\hat{\goiv}_{\tf}$ as the \textit{trajectory model}.

In the equivalent PWA form of $\nn$, we have
\begin{align}\label{eq:pwa_map}
    \hat{\planv}(\ts) = \Ccon_{\is, \ts}[\planv_0\tp, \paramv\tp]\tp + \dcon_{\is, \ts}
    \ \regtext{and}\
    \hat{\goiv}_{\tf} = \Ccon_{\is, \qdim}[\planv_0\tp, \paramv\tp]\tp + \dcon_{\is, \qdim}
\end{align}
for all $[\planv_0\tp, \paramv\tp]\tp\in\hpoly([\zeros, \Acon_\is], \bcon_\is)$, where for $\ts = 0$, $\Ccon_{\is, 0} = [\eye_{\ndim_\ps}, \zeros]$, $\dcon_{\is, 0} = \zeros$, for $\ts = \Delta\ts, \cdots, \tf$, $\Ccon_{\is, \ts} = [\eye, (\Ccon_\is)_{\ell_1:\ell_2, 1:\nparam}]$, $\dcon_{\is, \ts} = (\dcon_\is)_{\ell_1:\ell_2}$, $\ell_1 = (\frac{\ts}{\Delta\ts} - 1)\ndim_\ps + 1$, $\ell_2 = \frac{\ts}{\Delta\ts}\ndim_\ps$, $\Ccon_{\is, \qdim} = [\zeros , (\Ccon_\is)_{\ell_3:\ell_4, 1:\nparam}]$, $\dcon_{\is, \qdim} = (\dcon_\is)_{\ell_3:\ell_4}$, $\ell_3 = \frac{\tf}{\Delta\ts}\ndim_\ps + 1$, $\ell_4 = \frac{\tf}{\Delta\ts}\ndim_\ps + \ndim_{\qdim}$, and $(\hpoly(\Acon_\is, \bcon_\is), \Ccon_\is, \dcon_\is)$ is the $\is^\regtext{th}$ affine map tuple of the equivalent PWA system, $\is = 1, \cdots, \npwa$.

To avoid obstacles between timesteps, we use linear interpolation to approximate continuous time motion.
At $\ts = 0, \Delta\ts, \cdots, \tf$, $\hat{\planv}(\ts')$ for $\ts < \ts' < \ts + \Delta\ts$ is defined by
\begin{align}\label{eq:cont_time_approx}
    \hat{\planv}(\ts') = \hat{\planv}(\ts) + (\hat{\planv}(\ts + \Delta\ts) - \hat{\planv}(\ts))(\tfrac{\ts' - \ts}{\Delta\ts}).
\end{align}

\subsection{Estimating Modeling Error (Offline)}
To account for discrepancies between the trajectory model and the actual trajectories realized by the black-box system, we require a \textit{modeling error bound}.
We estimate the modeling error bound similarly to PARC \cite{chung2024goal} and \cite{kousik2019safe, shao2021reachability}.

Offline, we sample $\planv_{0, \is}$, $\paramv_\is$, and $\dv_\is(\cdot)$ uniformly from $\Xplan_0$, $\K$, and $\Dfun$ for $\is = 1, \cdots, \ndim_\sample$.
Then, for each $\js = 1, \cdots, \ndim_\ps + \ndim_{\qdim}$, we define the \textit{maximum final error} $\es_{\tf, \js} \in \R^{+}_{0}$ as:
\begin{align}\label{eq:max_final_error}
\es_{\tf, \js} = \max_\is
        \abs{\left([
        \hat{\planv}(\tf)\tp,
        \hat{\goiv}_{\tf}\tp
        ]\tp
        - [
        \planv(\tf)\tp,
        \goiv_{\tf}\tp
        ]\tp
        \right)_{\js}},
\end{align}
and for each $\js = 1, \cdots, \ndim_\ps$ and each $\ts = 0, \Delta\ts, \cdots, \tf-\Delta\ts$, we define the \textit{maximum interval error} $\itv{\es}_{\ts, \js} \in \R^{+}_{0}$ as:
\begin{align}\label{eq:max_int_error}
    \itv{\es}_{\ts, \js} = \max_{\is,\ \ts' \in [\ts,\ts+\Delta\ts]}\abs{\left(\hat{\planv}(\ts') - \planv(\ts') \right)_{\js}},
\end{align}
where $\hat{\planv}$, $\hat{\goiv}_{\tf}$, $\planv$, and $\goiv_{\tf}$ are computed from \eqref{eq:traj_model} and \eqref{eq:blackboxsys} with $\paramv_\is$, $\planv_{0, \is}$, $\dv_\is$.
We assume this approach provides an upper bound to the modeling error.

To incorporate the error bounds for geometric computation, we express the \textit{maximum final error set} $\Eset_{\tf} \subset \R^{\ndim_\ps + \ndim_{\qdim}}$ and the \textit{maximum interval error set} $\itv{\Eset}_{\ts} \subset \R^{\ndim_\ps}$ as hyperrectangles centered at the origin for $\ts = 0, \Delta\ts, \cdots, \tf-\Delta\ts$:
\begin{subequations}
\begin{align}
    \Eset_{\tf} &=  [-\es_{\tf, 1}, \es_{\tf, 1}]\times\cdots\times[-\es_{\tf, \ndim_\ps + \ndim_{\qdim}}, \es_{\tf, \ndim_\ps + \ndim_{\qdim}}],\\
    \itv{\Eset}_{\ts} &= [-\itv{\es}_{\ts, 1}, \itv{\es}_{\ts, 1}]\times\cdots\times[-\itv{\es}_{\ts, \ndim_\ps}, \itv{\es}_{\ts, \ndim_\ps}].
\end{align}
\end{subequations}
We refer the reader to {\cite[Section IV.D]{chung2024goal}} for a detailed discussion on the validity of our approach and assumptions on the modeling error.
In the following sections, we will discuss how to compute \eqref{eq:subset_bras} using NeuralPARC.

\subsection{NeuralPARC: Reach Set (Online)}\label{sec:reach_set}
In this section, we compute the backward reachable set (BRS) of the trajectory model with respect to an affine map.
The BRS is a set of initial workspace conditions $\planv_0$ and trajectory parameters $\paramv$ with which the black-box system is \textit{guaranteed} to reach the goal set $\Xgoal$ at time $\tf$.

First, shrink or buffer the goal and obstacles as $\tilde{\Xgoal} = \Xgoal \ominus \Eset_{\tf}$ and $\tilde{\Obs}_{\ts} = \union_{\is = 1}^{\nobs} \tilde{\Obs}_{\ts,\is} = \union_{\is = 1}^{\nobs} ({\Obs}_{\is} \oplus \itv{\Eset}_{\ts})$ for $\ts = 0, \cdots, \tf-\Delta\ts$.
\begin{lem}[Translating Guarantees from Trajectory Model to Black-Box System]\label{lem:traj_to_real}
    If \eqref{eq:max_final_error} and \eqref{eq:max_int_error} provides an upper bound to the modeling error, then the BRAS $\hat{\Xi}$ of the trajectory model:
    \begin{equation} \label{eq:subset_bras}
    \begin{split}
        \hat{\Xi}(\tf, \Xgoal, \Obs) =
        \{&(\planv_0, \paramv) \in \Xplan_0 \times \K\ |\begin{bmatrix}
        \hat{\planv}(\tf)\\
        \hat{\goiv}_{\tf}
        \end{bmatrix} \in \tilde{\Xgoal}, \eqref{eq:traj_model}, \hat{\planv}(0) = \planv_0\\
        &\hat{\planv}(\ts') \notin \tilde{\Obs}_{\ts}, \ts \leq \ts' \leq \ts + \Delta\ts,\forall \ts = 0, \cdots, \tf-\Delta\ts\},
    \end{split}
    \end{equation}
    fulfills \eqref{eq:true_bras}, which is defined on the black-box system \eqref{eq:blackboxsys}.
\end{lem}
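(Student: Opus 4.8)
The plan is to prove the set inclusion $\hat{\Xi}(\tf, \Xgoal, \Obs) \subseteq \Xi(\tf, \Xgoal, \Obs)$ directly: I fix an arbitrary pair $(\planv_0, \paramv) \in \hat{\Xi}(\tf, \Xgoal, \Obs)$ and verify that it satisfies each of the three defining conditions of $\Xi$ in \eqref{eq:true_bras} for every disturbance $\dv(\cdot) \in \Dfun$. The workhorse is the hypothesis that \eqref{eq:max_final_error} and \eqref{eq:max_int_error} upper-bound the modeling error, which I read as the elementwise guarantee that, for all $(\planv_0, \paramv) \in \Xplan_0 \times \K$ and all $\dv \in \Dfun$, the final-error vector $[\hat{\planv}(\tf)\tp, \hat{\goiv}_{\tf}\tp]\tp - [\planv(\tf)\tp, \goiv_{\tf}\tp]\tp$ lies in the hyperrectangle $\Eset_{\tf}$, and the interval-error vector $\hat{\planv}(\ts') - \planv(\ts')$ lies in $\itv{\Eset}_{\ts}$ for each $\ts' \in [\ts, \ts + \Delta\ts]$. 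Crucially, both error sets are symmetric about the origin, which lets me absorb errors in either direction.

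For the reach condition, I would start from $[\hat{\planv}(\tf)\tp, \hat{\goiv}_{\tf}\tp]\tp \in \tilde{\Xgoal} = \Xgoal \ominus \Eset_{\tf}$, which by definition of the Pontryagin difference means $[\hat{\planv}(\tf)\tp, \hat{\goiv}_{\tf}\tp]\tp + \ev \in \Xgoal$ for every $\ev \in \Eset_{\tf}$. Taking $\ev$ to be the true-minus-model error $[\planv(\tf)\tp, \goiv_{\tf}\tp]\tp - [\hat{\planv}(\tf)\tp, \hat{\goiv}_{\tf}\tp]\tp$, which lies in $\Eset_{\tf}$ by symmetry of the bound, collapses the sum exactly to the true terminal state $[\planv(\tf)\tp, \goiv_{\tf}\tp]\tp$, placing it in $\Xgoal$. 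For the avoid condition, the natural route is contradiction: suppose the true trajectory hits an obstacle, $\planv(\ts') \in \Obs_\is$ at some $\ts' \in [0, \tf]$; I pick the index $\ts$ with $\ts' \in [\ts, \ts + \Delta\ts]$ (these intervals cover $[0, \tf]$), so the interval bound gives $\hat{\planv}(\ts') - \planv(\ts') \in \itv{\Eset}_{\ts}$, hence $\hat{\planv}(\ts') = \planv(\ts') + (\hat{\planv}(\ts') - \planv(\ts')) \in \Obs_\is \oplus \itv{\Eset}_{\ts} \subseteq \tilde{\Obs}_{\ts}$, contradicting the membership of $(\planv_0, \paramv)$ in $\hat{\Xi}$, which forbids exactly this. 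The initial-condition clause $\planv(0) = \planv_0$ holds trivially from the black-box definition \eqref{eq:blackboxsys}.

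The main obstacle is conceptual bookkeeping rather than hard analysis: I must keep the two opposite set operations aligned, shrinking the goal via Pontryagin difference for the reach guarantee versus inflating obstacles via Minkowski sum for the avoid guarantee, and invoke the origin-symmetry of $\Eset_{\tf}$ and $\itv{\Eset}_{\ts}$ to absorb the error in the correct direction in each case. A second point to handle with care is that $\hat{\Xi}$ is stated on the deterministic model with no reference to $\dv$, while $\Xi$ quantifies over all $\dv \in \Dfun$; this gap is bridged entirely by reading the error-bound hypothesis as holding uniformly over $\dv$, so that the single model trajectory certifies reach-avoid against every realizable disturbance. Finally, I would note that the continuous-time avoidance relies on the interpolation \eqref{eq:cont_time_approx} being the same $\hat{\planv}(\ts')$ that appears both in $\hat{\Xi}$ and in the interval-error definition \eqref{eq:max_int_error}, so no separate argument is needed for times strictly between samples.
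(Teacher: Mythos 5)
Your proof is correct and is essentially the same argument the paper relies on: the paper itself defers this lemma to \cite[Section IV.E]{chung2024goal}, and the standard reasoning there is exactly what you give --- the Pontryagin-shrunk goal absorbs the (origin-symmetric) final-error vector to certify reaching, the Minkowski-buffered obstacles absorb the interval-error vector (including the interpolation error, since \eqref{eq:max_int_error} is defined against the interpolated $\hat{\planv}(\ts')$) to certify avoidance by contradiction, and uniformity of the error bound over $\dv \in \Dfun$ carries the deterministic model guarantee to every disturbance realization. No gaps; this matches the intended proof.
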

\begin{proof}
    See \cite[Section IV.E]{chung2024goal}.
\end{proof}
\noindent We can now perform analysis directly on the trajectory model. 
To begin, we use RPM to obtain $(\hpoly(\Acon_{\sap_1}, \bcon_{\sap_1}), \Ccon_{\sap_1}, \dcon_{\sap_1})$ from \eqref{eq:first_map} for $\nn$ and a random seed $\paramv \in \K$.
Then, the BRS $\Rbrs$ is an H-polytope \cite{chung2024goal, thomas2006robust}:
\begin{prop} [BRS of an Affine Map] \label{prop:brs}
    Given $(\hpoly(\Acon_{\sap_\is}, \bcon_{\sap_\is}), \Ccon_{\sap_\is}, \dcon_{\sap_\is})$ and $\Xgoal$.
    The reach set $\Rbrs$ of $\Xgoal$ in the $\is^\regtext{th}$ PWA region is:
    \begin{align}\label{eq:brs}
        \Rbrs = \brs{\tilde{\Xgoal}, (\Xplan_0 \times \hpoly(\Acon_{\sap_\is}, \bcon_{\sap_\is}), \begin{bmatrix}\Ccon_{\sap_\is, \tf}\\\Ccon_{\sap_\is, \qdim}\end{bmatrix}, \begin{bmatrix}\dcon_{\sap_\is, \tf}\\\dcon_{\sap_\is, \qdim}\end{bmatrix})}.
    \end{align}
    Then, for all $[{\planv_0}\tp, {\paramv}\tp]\tp \in \Rbrs$, $\dv(\cdot)\in\Dfun$, we have $[\dyn_\pdim(\planv_0, \paramv, \tf, \dv)\tp, \dyn_\qdim(\paramv, \dv)\tp]\tp \in \Xgoal$.
\end{prop}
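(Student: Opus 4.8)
The plan is to read off the membership conditions encoded by the preimage operator $\brs{\cdot}$, identify the resulting affine image with the trajectory model's prediction, and then cancel the modeling error against the shrunken goal via the Pontryagin/Minkowski identity. First I would apply the closed-form preimage formula to $\Rbrs$ in \eqref{eq:brs}: a point $[\planv_0\tp, \paramv\tp]\tp \in \Rbrs$ must lie in the affine map tuple's domain $\Xplan_0 \times \hpoly(\Acon_{\sap_\is}, \bcon_{\sap_\is})$, and its image under the stacked affine map of \eqref{eq:brs} must land in $\tilde{\Xgoal}$ (which is again an H-polytope, being the Pontryagin difference of an H-polytope and a hyperrectangle). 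Since $\Xplan_0 \times \hpoly(\Acon_{\sap_\is}, \bcon_{\sap_\is}) \subseteq \hpoly([\zeros, \Acon_{\sap_\is}], \bcon_{\sap_\is})$, the point sits inside the $\is^\regtext{th}$ PWA region, so by \eqref{eq:pwa_map} this affine image is exactly the stacked model prediction $[\hat{\planv}(\tf)\tp, \hat{\goiv}_{\tf}\tp]\tp$. I therefore obtain $[\hat{\planv}(\tf)\tp, \hat{\goiv}_{\tf}\tp]\tp \in \tilde{\Xgoal}$.

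Next I would absorb the modeling error. By definition $\tilde{\Xgoal} = \Xgoal \ominus \Eset_{\tf}$, so the standard Minkowski-sum inclusion gives $\tilde{\Xgoal} \oplus \Eset_{\tf} \subseteq \Xgoal$. The bounds \eqref{eq:max_final_error}, encoded in the origin-centered hyperrectangle $\Eset_{\tf}$, guarantee that the discrepancy between the true black-box output $[\dyn_\pdim(\planv_0, \paramv, \tf, \dv)\tp, \dyn_\qdim(\paramv, \dv)\tp]\tp$ and the model prediction $[\hat{\planv}(\tf)\tp, \hat{\goiv}_{\tf}\tp]\tp$ is a vector of $\Eset_{\tf}$. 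Writing the true output as the model prediction plus this discrepancy and combining the containment in $\tilde{\Xgoal}$ with the Minkowski inclusion then places the true output in $\Xgoal$, which is the claim. Because $\hat{\planv}$ and $\hat{\goiv}_{\tf}$ are deterministic in $(\planv_0, \paramv)$ and do not depend on $\dv$, the conclusion is uniform in the disturbance once the error set is valid.

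The delicate step is precisely justifying that $\Eset_{\tf}$ upper-bounds this discrepancy for the \emph{given} $[\planv_0\tp, \paramv\tp]\tp$ and for \emph{all} $\dv(\cdot) \in \Dfun$, rather than only for the finitely many offline samples. Here I would lean on Assumption~\ref{ass:eti}: both $\dyn_\pdim(\planv_0, \paramv, \tf, \dv) = \dyn_\pdim(0, \paramv, \tf, \dv) + \planv_0$ and, by construction \eqref{eq:traj_model}, $\hat{\planv}(\tf)$ carry the same additive $\planv_0$ offset, so the final error cancels its $\planv_0$ dependence and is a function of $(\paramv, \dv)$ alone; an estimate obtained by sampling $\paramv \in \K$ and $\dv \in \Dfun$ thus transfers to every $\planv_0 \in \Xplan_0$. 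The remaining quantifier over all disturbances is exactly the upper-bound hypothesis already stated around \eqref{eq:max_final_error} and carried through Lemma~\ref{lem:traj_to_real}, so I would invoke that assumption and the analysis of \cite[Section IV.E]{chung2024goal} rather than attempt to prove a worst-case sampling bound from scratch. This is the main obstacle in that it is an assumption about the sampling procedure, not a purely geometric deduction; everything else reduces to the preimage identity and the Pontryagin/Minkowski cancellation.
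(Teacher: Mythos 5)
Your proof is correct and follows the same route the paper takes (the paper simply defers to \cite[Proposition 17]{chung2024goal}, whose argument is exactly this: unpack the preimage operator to get domain membership plus $[\hat{\planv}(\tf)\tp, \hat{\goiv}_{\tf}\tp]\tp \in \tilde{\Xgoal}$ via \eqref{eq:pwa_map}, then absorb the modeling error through $\tilde{\Xgoal} \oplus \Eset_{\tf} \subseteq \Xgoal$ under the stated upper-bound assumption). Your additional observation that translation invariance makes the final error independent of $\planv_0$, so the sampled bound transfers to all of $\Xplan_0$, is a correct and worthwhile clarification of the hypothesis rather than a deviation.
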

\begin{proof}
    See \cite[Proposition 17]{chung2024goal}.
\end{proof}
\noindent Once the BRS is computed, we check whether it is empty by solving an LP.
If it is not, we proceed to the next section to account for obstacle avoidance.
If the BRS is empty, we query RPM to return a neighboring PWA region and its affine map tuple to repeat the analysis in this section.

\subsection{NeuralPARC: Avoid Set (Online)}\label{sec:avoid_set}
In this section, we compute the backward avoid set (BAS) of the trajectory model with respect to an affine map.
The BAS \textit{overapproximates} the set of all initial workspace conditions $\planv_0$ and trajectory parameters $\paramv$ with which the black-box system will collide with the obstacles $\Obs$ at some time $\ts \in [0, \tf]$.

With the affine map tuple $(\hpoly(\Acon_{\sap_\is}, \bcon_{\sap_\is}), \Ccon_{\sap_\is}, \dcon_{\sap_\is})$ from Section \ref{sec:reach_set}, we can now compute the BAS $\Ravd$ of obstacles $\Obs$ through an affine map as a union of AH-polytopes.
\begin{thm} [BAS of an Affine Map] \label{thm:bas}
    Given $(\hpoly(\Acon_{\sap_\is}, \bcon_{\sap_\is}), \Ccon_{\sap_\is}, \dcon_{\sap_\is})$, $\Rbrs$, and, $\Obs$, the BAS $\Ravd$ of $\Obs$ in the $\is^\regtext{th}$ PWA region is:
    \begin{align}
        \Ravd &= \union_{\js = 1}^{\nobs} \union_{\ts\in\{0, \cdots, \tf-\Delta\ts\}} \left(\Rbrs\cap\left(\conv{\Obsbrs_{\ts, \js}, \Obsbrs_{\ts+\Delta\ts, \js}}\times\R^{\nparam}\right)\right),
    \end{align}
    where $\Obsbrs_{\ts, \js} = \projtext{\nplan}(\brsfunction(\tilde{\Obs}_{\ts, \js}, (\hpoly([\zeros, \Acon_{\sap_\is}], \bcon_{\sap_\is}), \Ccon_{\sap_\is, \ts}, \dcon_{\sap_\is, \ts})))$ and $\Obsbrs_{\ts+\Delta\ts, \js} = \projtext{\nplan}(\brsfunction(\tilde{\Obs}_{\ts, \js}, (\hpoly([\zeros, \Acon_{\sap_\is}], \bcon_{\sap_\is}), \Ccon_{\sap_\is, \ts+\Delta\ts}, \dcon_{\sap_\is, \ts+\Delta\ts})))$.
    Then, for all $[{\planv_0}\tp, {\paramv}\tp]\tp \in \Rbrs\setminus\Ravd$, $\dv(\cdot)\in\Dfun$, $\ts\in[0, \tf]$ we have $\dyn_\pdim(\planv_0, \paramv, \ts, \dv) \notin \Obs$.
\end{thm}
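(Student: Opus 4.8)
The plan is to prove the contrapositive of the avoidance claim: I will show that whenever the model trajectory generated by $(\planv_0, \paramv)$ collides with a buffered obstacle during some interval, the pair $(\planv_0, \paramv)$ must lie in $\Ravd$. Combined with the modeling-error buffering principle behind Lemma~\ref{lem:traj_to_real}---namely that $\tilde\Obs_{\ts,\js} = \Obs_\js \oplus \itv{\Eset}_\ts$ with a symmetric error bound forces $\hat\planv(\ts') \notin \tilde\Obs_{\ts,\js} \Rightarrow \planv(\ts') \notin \Obs_\js$ for every $\ts' \in [\ts, \ts+\Delta\ts]$---this yields the stated guarantee for the black-box trajectory $\dyn_\pdim$. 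First I would fix $(\planv_0, \paramv) \in \Rbrs$; because $\Rbrs$ is a preimage through the $\is$-th affine map carrying the domain constraint $\Xplan_0 \times \hpoly(\Acon_{\sap_\is}, \bcon_{\sap_\is})$, this places $(\planv_0, \paramv)$ in PWA region $\is$, so the affine expressions \eqref{eq:pwa_map} for $\hat\planv(\ts)$ are valid and $\paramv$ is a feasible witness for the projections defining $\Obsbrs_{\ts,\js}$.

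The heart of the argument is the per-interval collision analysis. By the interpolation \eqref{eq:cont_time_approx}, every continuous-time model state in $[\ts, \ts+\Delta\ts]$ is a convex combination $\hat\planv(\ts') = (1-\lambda)\hat\planv(\ts) + \lambda\hat\planv(\ts+\Delta\ts)$ with $\lambda = (\ts' - \ts)/\Delta\ts \in [0,1]$. By translation invariance (Assumption~\ref{ass:eti}), reflected in the identity block of $\Ccon_{\sap_\is, \ts} = [\eye, \cdot]$, the initial condition enters additively, so I can write $\hat\planv(\ts) = \planv_0 + g_\ts(\paramv)$ with $g_\ts$ depending only on $\paramv$. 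Suppose $\hat\planv(\ts') \in \tilde\Obs_{\ts,\js}$ for some such $\ts'$, witnessed by a point $o \in \tilde\Obs_{\ts,\js}$. Solving for $\planv_0$ and substituting the convex-combination form, I would rewrite $\planv_0 = (1-\lambda)(o - g_\ts(\paramv)) + \lambda(o - g_{\ts+\Delta\ts}(\paramv))$. Since both $\Obsbrs_{\ts,\js}$ and $\Obsbrs_{\ts+\Delta\ts,\js}$ are built from the \emph{same} buffered obstacle $\tilde\Obs_{\ts,\js}$ (using the maps at $\ts$ and $\ts+\Delta\ts$ respectively), each bracketed term lies in the corresponding projected preimage, so $\planv_0 \in \conv{\Obsbrs_{\ts,\js}, \Obsbrs_{\ts+\Delta\ts,\js}}$; as $\paramv$ is unconstrained there, $(\planv_0, \paramv) \in \Rbrs \cap \left(\conv{\Obsbrs_{\ts,\js}, \Obsbrs_{\ts+\Delta\ts,\js}} \times \R^{\nparam}\right) \subseteq \Ravd$.

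Finally, I would take the union over all obstacle indices $\js$ and all intervals indexed by $\ts \in \{0, \dots, \tf - \Delta\ts\}$, which jointly cover $\Obs$ and the whole horizon $[0, \tf]$, so any collision of the model trajectory places $(\planv_0, \paramv)$ in $\Ravd$. Contrapositively, $(\planv_0, \paramv) \in \Rbrs \setminus \Ravd$ forces $\hat\planv(\ts') \notin \tilde\Obs_{\ts,\js}$ for all $\ts'$ and $\js$, and the buffering argument above then gives $\dyn_\pdim(\planv_0, \paramv, \ts, \dv) \notin \Obs$ for every $\ts \in [0, \tf]$ and every $\dv(\cdot) \in \Dfun$. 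The main obstacle I anticipate is the second step: justifying that the single weight $\lambda$ can simultaneously serve as the time-interpolation weight and the convex-combination weight between the two preimages. This identification relies essentially on translation invariance forcing the identity block, which lets $\planv_0$ be factored uniformly out of the convex combination; without it the Cartesian product with $\R^{\nparam}$ would not soundly overapproximate the collision set, and I would instead have to track the coupling between $\planv_0$ and $\paramv$ explicitly.
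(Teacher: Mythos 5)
Your proposal is correct and follows essentially the same route as the paper's proof: it reduces the claim to the contrapositive on the trajectory model via Lemma~\ref{lem:traj_to_real}, uses translation invariance to write $\hat\planv(\ts) = \planv_0 + \Ccon_\ts\paramv + \dcon_\ts$, and identifies the time-interpolation weight with the convex-combination weight, your points $o - g_\ts(\paramv)$ and $o - g_{\ts+\Delta\ts}(\paramv)$ being exactly the paper's $\planv_1 = \planv_0 + \alpha\hat\planv_d$ and $\planv_2 = \planv_0 + (\alpha-1)\hat\planv_d$ with $\beta=\alpha$. The "main obstacle" you flag at the end is precisely the step the paper resolves the same way, so there is no gap.
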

\begin{proof}
    By Lemma \ref{lem:traj_to_real}, it is sufficient to show that, if there is a point $[{\planv_0}\tp, \paramv\tp]\tp$ in $\Rbrs$ that collides with $\tilde{\Obs}_{\ts, \js}$ between time $\ts$ and $\ts + \Delta\ts$, then $\planv_0$ must be in the convex hull between $\Obsbrs_{\ts, \js}$ and $\Obsbrs_{\ts+\Delta\ts, \js}$.
    Mathematically, we want to show if $\exists [{\planv_0}\tp, \paramv\tp]\tp \in \Rbrs, \alpha \in [0, 1]$ such that $\hat{\planv}(\ts) + \alpha(\hat{\planv}(\ts + \Delta\ts) - \hat{\planv}(\ts)) \in \tilde{\Obs}_{\ts, \js}$, then $\exists \beta\in[0, 1], \planv_1\in\Obsbrs_{\ts, \js}, \planv_2\in\Obsbrs_{\ts+\Delta\ts, \js}$ such that $\planv_0 = \planv_1 + \beta(\planv_2 - \planv_1)$. 
    From Assumption \ref{ass:eti} and \eqref{eq:pwa_map}, we can always find some $\planv_1 + \Ccon_\ts \paramv + \dcon_\ts \in \tilde{\Obs}_{\ts, \js}$ and $\planv_2 + \Ccon_{\ts + \Delta\ts} \paramv + \dcon_{\ts + \Delta\ts} \in \tilde{\Obs}_{\ts, \js}$, where $\Ccon_\ts = (\Ccon_{\sap_\is, \ts})_{1:\nplan, (\nplan+1):\nparam}$, $\dcon_\ts = (\dcon_{\sap_\is, \ts})_{(\nplan+1):\nparam}$, $\Ccon_{\ts + \Delta\ts} = (\Ccon_{\sap_\is, \ts + \Delta\ts})_{1:\nplan, (\nplan+1):\nparam}$, and $\dcon_{\ts + \Delta\ts} = (\dcon_{\sap_\is, \ts + \Delta\ts})_{(\nplan+1):\nparam}$, which implies $\planv_1\in\Obsbrs_{\ts, \js}$ and $\planv_2\in\Obsbrs_{\ts+\Delta\ts, \js}$.
    Then, the proof is satisfied when $\beta = \alpha$, $\planv_1 = \planv_0 + \alpha\hat{\planv}_d$, and $\planv_2 = \planv_0 + (\alpha-1)\hat{\planv}_d$, where $\hat{\planv}_d = ((\Ccon_{\ts + \Delta\ts} - \Ccon_{\ts})\paramv + \dcon_{\ts + \Delta\ts} - \dcon_{\ts})$.
\end{proof}
\noindent From Proposition \ref{prop:brs} and Theorem \ref{thm:bas}, every $[{\planv_0}\tp, \paramv\tp]\tp \in \Rbrs\setminus\Ravd$ guarantees goal-reaching and obstacle avoidance (i.e. $\Rbrs\setminus\Ravd$ is a BRAS).
To sample from $\Rbrs\setminus\Ravd$, we first sample $[{\planv_0}\tp, \paramv\tp]\tp$ from within $\Rbrs$, where checking $[{\planv_0}\tp, \paramv\tp]\tp\in\Rbrs$ requires only an inequality check.
Then, we check whether $\planv_0$ is in $\Ravd_{\nplan} := \proj[\nplan]{\Ravd}$, which are $\nobs \times \frac{\tf}{\Delta\ts}$ AH-polytopes given by:
\begin{align}
    \Ravd_{\nplan} &= \union_{\js = 1}^{\nobs} \union_{\ts\in\{0, \cdots, \tf-\Delta\ts\}} \left(\proj[\nplan]{\Rbrs}\cap\left(\conv{\Obsbrs_{\ts, \js}, \Obsbrs_{\ts+\Delta\ts, \js}}\right)\right),
\end{align}
$\Ravd_{\nplan}$ only requires basic matrix operations to compute.
If $\planv_0$ is in any of the AH-polytopes (which can be checked by solving LPs), then $[{\planv_0}\tp, \paramv\tp]\tp \notin \Rbrs\setminus\Ravd$.
We can speed up this process by stopping once an AH-polytope that contains $\planv_0$ has been found, or by first rooting out empty AH-polytopes.

If a point from within $\Rbrs\setminus\Ravd$ cannot be found after a certain amount of samples, or discovering a larger variety of safe trajectories is desired, we can query RPM to return a different PWA region, and repeat the steps in Section \ref{sec:reach_set} and Section \ref{sec:avoid_set} until all PWA regions have been explored.
\begin{figure}[t]
\centering
    \includegraphics[width=1\columnwidth]{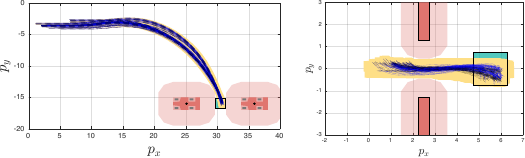}
\caption{
100 samples of (left) the drift parking vehicle and (right) the ASV from NeuralPARC using a network of depth 5 and width 8 for each hidden layer.
The yellow tubes are the modeling error bounds buffered onto NeuralPARC's predicted trajectories (dashed lines).
All actual trajectories (solid lines) reach the green goal and avoid the red obstacles buffered with the agent's circular volume.
}\label{fig:traj_combined}
\vspace*{-0.5cm}
\end{figure}

\section{Experiments} \label{sec:result}
We now compare the performance of NeuralPARC to PARC \cite{chung2024goal} on drift parallel parking maneuvers, and assess how network size affects NeuralPARC's performance.
All experiments, demonstrations, and training were run on a desktop computer with a 24-core i9 CPU, 32 GB RAM, and an Nvidia RTX 4090 GPU on MATLAB.

\subsection{Experiment Setup}
To ensure a fair comparison with PARC, we used the same parameterized drifting model as \cite{chung2024goal}, with $\planv = [\px, \py]\tp\subset\R^2 = \Xplan_0$, where $\px$ and $\py$ are the x and y-coordinate of the center of the car, $\paramv = [\vel, \pth_{\sideslip}]\tp \subset [9, 11]\times[\frac{\pi}{6}, \frac{2}{9}\pi]$, where $\vel$ is the desired velocity to enter the drifting regime, and $\pth_{\sideslip}$ is the desired angle to perform a hard braking, $\goiv_{\tf} = \pth_{\tf}$, where $\pth_{\tf}$ is the yaw angle of the car at time $\tf$, $\tf = 7.8$, $\Delta\ts = 0.1$, $\D = \emptyset$ to ensure a fair comparison (the original PARC example does not include disturbance), $\Xgoal = [29.7, 31.3]\times[-16.8, -15.2]\times[\frac{5}{6}\pi, \frac{7}{6}\pi]$, and $\Obs_1$, $\Obs_2$ are shown in Fig.~\ref{fig:traj_combined}.
To account for the car's volume, we Minkowski-summed $\Obs$ with the circular overapproximation of the car's body.
See \cite{chung2024goal} for details of the system dynamics and PARC's implementation.

Offline, we collected data from 10,000 trajectories by uniformly sampling $\K$, from which the methods will build the trajectory model on.
For NeuralPARC, the ReLU neural networks used have a depth of 5, with the width of each hidden layer varying between 6, 7, and 8 to observe its effects on the performance.

Online, since NeuralPARC's performance on locating the first safe sample depends on the randomized initial seed of $\paramv$, we ran NeuralPARC on the same environment setup 100 times with different seeds.
In each PWA region, NeuralPARC obtained 50 samples from the BRS.
If none of the samples were in the BRAS, NeuralPARC would explore the neighboring PWA region.

\begin{table}[t]
\captionsetup{font=small}
\centering
\begin{tabular}{l|r|r|r|r|r}
     \multicolumn{1}{c|}{\textbf{Method}} & \multirow{2}{*}{\textbf{$\npwa$}} & \multicolumn{1}{c|}{\textbf{BRAS}} & \multicolumn{3}{c}{\textbf{Time Until First Sample (\unit{s})}} \\
    \textbf{($\ndim\idx{1}, \cdots, \ndim\idx{\depth-1}$)} & & \textbf{Time (\unit{s})} & \multicolumn{1}{c|}{\textbf{Min}} & \multicolumn{1}{c|}{\textbf{Max}} & \multicolumn{1}{c}{\textbf{Mean}}\\
    \hline
    \multicolumn{6}{c}{\textbf{Extreme Drift Parallel Parking}}\\
    \hline
    \multirow{2}{*}{PARC} & \multirow{2}{*}{1} & \multirow{2}{*}{0.82} & \multirow{2}{*}{Timeout} & \multirow{2}{*}{Timeout} & \multirow{2}{*}{Timeout}\\
    &&&&&\\
    NeuralPARC, & \multirow{2}{*}{68} & \multirow{2}{*}{3.12} & \multirow{2}{*}{0.73} & \multirow{2}{*}{4.61} & \multirow{2}{*}{3.14}\\
    (6, 6, 6, 6) &&&&&\\
    NeuralPARC, & \multirow{2}{*}{122} & \multirow{2}{*}{2.47} & \multirow{2}{*}{0.46} & \multirow{2}{*}{3.61} & \multirow{2}{*}{2.45}\\
    (7, 7, 7, 7) &&&&&\\
    NeuralPARC, & \multirow{2}{*}{203} & \multirow{2}{*}{9.59} & \multirow{2}{*}{0.29} & \multirow{2}{*}{4.85} & \multirow{2}{*}{1.90}\\
    (8, 8, 8, 8) &&&&&\\
    \hline
    \multicolumn{6}{c}{\textbf{Deep RL ASV Agent with Large Disturbances}}\\
    \hline
    NeuralPARC, & \multirow{2}{*}{611} & \multirow{2}{*}{100.02} & \multirow{2}{*}{1.42} & \multirow{2}{*}{165.53} & \multirow{2}{*}{71.1}\\
    (8, 8, 8, 8) &&&&&
\end{tabular}
\caption{NeuralPARC and PARC \cite{chung2024goal} compared across different systems and network sizes.
The BRAS time is the time the method takes to compute the BRAS for \textit{all} PWA regions.
The time until first sample is the time the method takes to identify the first safe initial workspace state and trajectory parameter for 100 attempts.}
\label{table:experiments}
\vspace*{-0.5cm}
\end{table}

\subsection{Results and Discussion}
The results of the experiments are shown in Table \ref{table:experiments}, Fig.~\ref{fig:traj_combined}, Fig.~\ref{fig:car_pwa}, and Fig.~\ref{fig:car_error}.
In all experiments, trajectories identified by NeuralPARC all reached the goal and avoided the obstacles, whereas PARC failed to find any safe plans due to the large modeling error bound.

The number of PWA regions and therefore BRAS computation time across all PWA regions increases with the network size.
On the other hand, the larger the network size, the smaller the modeling error, and therefore the larger the BRAS volume and the more varied the safe trajectories are.

Surprisingly, despite the increase in the number of PWA regions, the time it took for one safe sample to be found \textit{decreases} with increase in network size.
Likely, the decrease in modeling error allows more PWA regions with safe samples to be discovered, offsetting the time required to explore more PWA regions.

\section{Demonstration}
The key idea of NeuralPARC is to distill black-box trajectory behaviors into the powerful, yet analyzable form of ReLU neural network to enable reach-avoid guarantees.
Thus, NeuralPARC is agnostic to how trajectories are generated, and can attain good performance as long as they are well-behaved and modellable.
We now demonstrate this versatility applying NeuralPARC to drift parking on robot car hardware, and navigating a narrow gap on a simulated ASV trained with deep RL.

\begin{figure}
\centering
    \includegraphics[width=1\columnwidth]{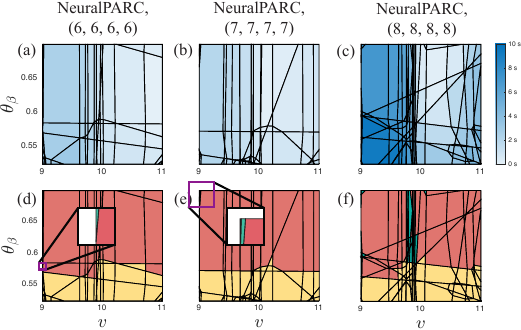}
\caption{
PWA regions in $\K$ for different network sizes in NeuralPARC, visualized with (a)-(c) BRAS computation time, where color indicates time elapsed, and (d)-(f) success of finding a safe sample, where red indicates failure, green indicates success, and yellow indicates an empty BRS.
The green regions for (d) and (e), highlighted by purple squares, are zoomed in for clarity.
}\label{fig:car_pwa}
\vspace*{-0.5cm}
\end{figure}

\subsection{Hardware Demonstration Setup}
For the hardware demonstration, we designed a parameterized family of drift parallel parking maneuvers on an F1/10 class robotic vehicle \cite{o2019f1}, with $\planv = [\px, \py]\tp\subset\R^2 = \Xplan_0$, where $\px$ and $\py$ are the x and y-coordinate of the center of the car in \unit{mm}, $\paramv = [\pydes, \vdes]\tp \subset [-1.9, -1]\times[2, 3.25]$, where $\pydes$ and $\vdes$ are the target y-coordinate and velocity in the motion capture system to begin drifting, $\goiv_{\tf} = \pth_{\tf}$, where $\pth_{\tf}$ is the yaw angle of the car in \unit{rad} at time $\tf$, $\tf = 4.9\unit{s}$, $\Delta\ts = 0.1\unit{s}$, $\Xgoal = [290, 850]\times[-4,225, -3,615]\times[-\frac{4}{3}\pi, -\frac{2}{3}\pi]$, and $\Obs_1$, $\Obs_2$ are two F1/10 vehicles of the same size, placed $1,200 \unit{mm}$ apart, centered at $\Xgoal$ (see Fig.~\ref{fig:front_figure}).
We accounted for the vehicle's volume in the same manner as in Section \ref{sec:result}.
See \cite{o2019f1} for the specifications of the vehicle used.

Offline, we uniformly sample 60 parameters from $\K$.
Trajectory data were collected by first using an OptiTrack $120 \unit{Hz}$ motion capture system for closed-loop proportional-integral-derivative (PID) feedback to arrive at the sampled $\pydes, \vdes$. 
Then, an open-loop drifting maneuver was performed.
The trajectory model was trained on a ReLU neural network with depth of 3 and hidden layer width of 8.
Online, we used NeuralPARC to identify one safe trajectory, which we tasked the vehicle to repeatedly follow for 10 times.

\subsection{ASV Demonstration Setup}
For the ASV demonstration, we used the model and deep RL agent trained in \cite{batista2024deep}, with $\planv = [\px, \py]\tp\subset\R^2 = \Xplan_0$, where $\px$ and $\py$ are the x and y-coordinate of the center of the boat, $\paramv = [\pth_0, \pxg, \pyg]\tp \subset [-\frac{\pi}{6}, \frac{\pi}{6}]\times[4, 7]\times[-1, 1]$, where $\pth_0$ is the heading angle of the boat, $\pxg$ and $\pyg$ are the x and y-coordinate of the desired goal position, $\goiv_{\tf} = \emptyset$, $\tf = 10$, $\Delta\ts = 0.1$, and the location of $\Xgoal$, $\Obs_1$, and $\Obs_2$ shown in Fig.~\ref{fig:traj_combined}.
Disturbances were applied on mass, force, torque, position, velocity, heading angle, and control inputs of the ASV.
We accounted for the boat's volume in the same manner as in Section \ref{sec:result}.
The RL policy was trained to reach the goal \textit{without} avoiding obstacles.
See \cite{batista2024deep} for details of the system dynamics, disturbances, and RL training.

Offline, we collected data from 5,000 trajectories of the RL agent in Isaac Gym \cite{makoviychuk2021isaac} to train the trajectory model.
We used a ReLU neural network with depth of 5 and hidden layer width of 8.
Online, we ran NeuralPARC 100 times in the same manner as Section \ref{sec:result}.
For practicality in performance, we partitioned $\K$ into 3-by-3-by-3 hyperrectangles, and computed the modeling error separately for each subdomain.
We refer the readers to \cite[Section III.C]{kousik2019technical} for the theoretic justifications and implementation strategy of partitioning the error bound for different parameter subdomains.

\subsection{Results and Discussion}
The results of the hardware experiment are shown in Fig.~\ref{fig:front_figure} with the first safe trajectory found after $13.44 \unit{s}$, while the results of the ASV experiment are shown in Table \ref{table:experiments}, Fig.~\ref{fig:front_figure}, and Fig.~\ref{fig:traj_combined}.
In all experiments, robots guided by NeuralPARC all reached the goal and avoided the obstacles.

Due to the open-loop drifting maneuver, the realized motion of the vehicle varied significantly even with the same trajectory parameter. 
Moreover, the limited amount of data collectible from hardware demonstrations made it very difficult to construct an accurate trajectory and error model.
On the other hand, while data can be collected easily from simulations, the large disturbances to the ASV and the unpredictable nature of RL made establishing formal guarantees extremely challenging \cite{qin2022sablas}.
Despite this, by accounting for the worst-case error and effective representation of reachable sets, NeuralPARC can find conditions where the sweep of the error bound with the predicted trajectories just barely skirt by the obstacles, as shown in Fig.~\ref{fig:front_figure}, successfully maintaining both \textit{safety} and \textit{liveness}.

\begin{figure}[t]
\centering
    \includegraphics[width=1\columnwidth]{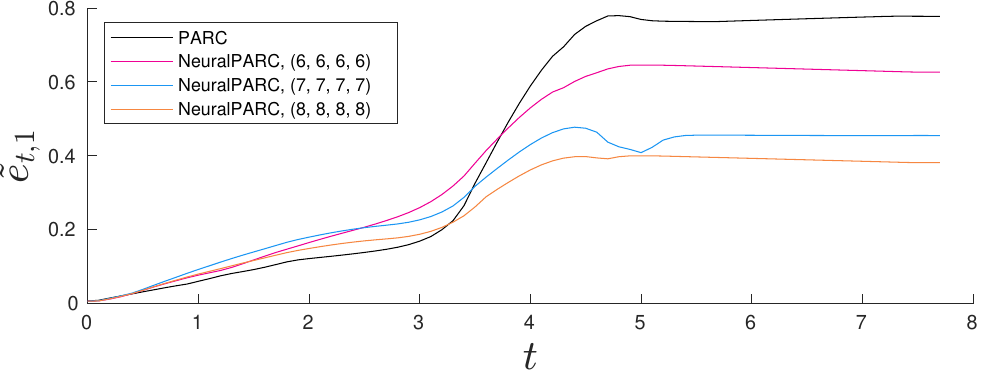}
\caption{
\textit{Maximum interval error} of $\px$ for PARC and NeuralPARC with different network sizes.
}\label{fig:car_error}
\vspace*{-0.5cm}
\end{figure}

\section{Conclusion} \label{sec:conclusion}
This paper proposes NeuralPARC, a method for solving the reach-avoid problem in black-box systems.
Our approach involves distilling the behavior of black-box trajectories into the expressive, yet analyzable form of ReLU neural networks, while correctly accounting for the modeling error.
We validated our approach on extreme drift parallel parking maneuvers and on a deep RL ASV agent.

NeuralPARC has two key limitations.
Firstly, its performance depends heavily on the starting seed.
Since RPM explores PWA regions neighbor-by-neighbor, the time until a safe sample is found can be very long if the trajectory corresponding to the starting seed is far away from safety.
As such, a future direction could be to use fast, but not necessarily always safe reach-avoid methods \cite{chen2021fastrack, kousik2020bridging} to determine a starting seed for NeuralPARC as a \textit{warm start}, enabling NeuralPARC for real-time applications.

Secondly, like PARC \cite{chung2024goal}, the validity of NeuralPARC depends heavily on the number of samples used in the modeling error computation, which limits its applicability in systems where data are not readily available.
If the modeling error was indeed found to be larger than expected online, one could update the modeling error bound, or fine-tune the learned trajectory model with the new data \cite{mamakoukas2020memory, miller2015ergodic}.

\renewcommand{\bibfont}{\normalfont\footnotesize}
{\renewcommand{\markboth}[2]{}% Remove header adjustment
\printbibliography}

@article{nguyen2011model,
  title={\href{https://doi.org/10.1007/s10339-011-0404-1}{Model learning for robot control: a survey}},
  author={Nguyen-Tuong, Duy and Peters, Jan},
  journal={Cognitive processing},
  volume={12},
  pages={319--340},
  year={2011},
  publisher={Springer}
}

@inproceedings{vincent2021reachable,
  title={\href{https://doi.org/10.1109/ICRA48506.2021.9561956}{Reachable polyhedral marching (rpm): A safety verification algorithm for robotic systems with deep neural network components}},
  author={Vincent, Joseph A and Schwager, Mac},
  booktitle={2021 IEEE International Conference on Robotics and Automation (ICRA)},
  pages={9029--9035},
  year={2021},
  organization={IEEE}
}

@INPROCEEDINGS{chung2024goal, 
    AUTHOR    = {Long Kiu Chung AND Wonsuhk Jung AND Chuizheng Kong AND Shreyas Kousik}, 
    TITLE     = {\href{https://doi.org/10.15607/RSS.2024.XX.117}{Goal-Reaching Trajectory Design Near Danger with Piecewise Affine Reach-avoid Computation}}, 
    BOOKTITLE = {Proceedings of Robotics: Science and Systems}, 
    YEAR      = {2024}, 
    ADDRESS   = {Delft, Netherlands}, 
    MONTH     = {July}, 
    DOI       = {10.15607/RSS.2024.XX.117} 
}

@inproceedings{herceg2013multi,
  title={\href{https://doi.org/10.23919/ECC.2013.6669862}{Multi-parametric toolbox 3.0}},
  author={Herceg, Martin and Kvasnica, Michal and Jones, Colin N and Morari, Manfred},
  booktitle={2013 European control conference (ECC)},
  pages={502--510},
  year={2013},
  organization={IEEE}
}

@inproceedings{sadraddini2019linear,
  title={\href{https://doi.org/10.1109/CDC40024.2019.9029363}{Linear encodings for polytope containment problems}},
  author={Sadraddini, Sadra and Tedrake, Russ},
  booktitle={2019 IEEE 58th conference on decision and control (CDC)},
  pages={4367--4372},
  year={2019},
  organization={IEEE}
}

@article{thomas2006robust,
  title={\href{https://doi.org/10.3182/20060607-3-IT-3902.00061}{Robust model predictive control for piecewise affine systems subject to bounded disturbances}},
  author={Thomas, Jean and Olaru, Sorin and Buisson, Jean and Dumur, Didier},
  journal={IFAC Proceedings Volumes},
  volume={39},
  number={5},
  pages={329--334},
  year={2006},
  publisher={Elsevier}
}

@inproceedings{tran2019star,
  title={\href{https://doi.org/10.1007/978-3-030-30942-8_39}{Star-based reachability analysis of deep neural networks}},
  author={Tran, Hoang-Dung and Manzanas Lopez, Diago and Musau, Patrick and Yang, Xiaodong and Nguyen, Luan Viet and Xiang, Weiming and Johnson, Taylor T},
  booktitle={Formal Methods--The Next 30 Years: Third World Congress, FM 2019, Porto, Portugal, October 7--11, 2019, Proceedings 3},
  pages={670--686},
  year={2019},
  organization={Springer}
}

@article{selim2022safe,
  title={\href{https://doi.org/10.1109/LRA.2022.3192205}{Safe reinforcement learning using black-box reachability analysis}},
  author={Selim, Mahmoud and Alanwar, Amr and Kousik, Shreyas and Gao, Grace and Pavone, Marco and Johansson, Karl H},
  journal={IEEE Robotics and Automation Letters},
  volume={7},
  number={4},
  pages={10665--10672},
  year={2022},
  publisher={IEEE}
}

@article{chen2021fastrack,
  title={\href{https://doi.org/10.1109/TAC.2021.3059838}{Fastrack: a modular framework for real-time motion planning and guaranteed safe tracking}},
  author={Chen, Mo and Herbert, Sylvia L and Hu, Haimin and Pu, Ye and Fisac, Jaime Fernandez and Bansal, Somil and Han, SooJean and Tomlin, Claire J},
  journal={IEEE Transactions on Automatic Control},
  volume={66},
  number={12},
  pages={5861--5876},
  year={2021},
  publisher={IEEE}
}

@inproceedings{kousik2019safe,
  title={\href{https://doi.org/10.1115/DSCC2019-9214}{Safe, aggressive quadrotor flight via reachability-based trajectory design}},
  author={Kousik, Shreyas and Holmes, Patrick and Vasudevan, Ram},
  booktitle={Dynamic Systems and Control Conference},
  volume={59162},
  pages={V003T19A010},
  year={2019},
  organization={American Society of Mechanical Engineers}
}

@article{shao2021reachability,
  title={\href{https://doi.org/10.1109/LRA.2021.3063989}{Reachability-based trajectory safeguard (RTS): A safe and fast reinforcement learning safety layer for continuous control}},
  author={Shao, Yifei Simon and Chen, Chao and Kousik, Shreyas and Vasudevan, Ram},
  journal={IEEE Robotics and Automation Letters},
  volume={6},
  number={2},
  pages={3663--3670},
  year={2021},
  publisher={IEEE}
}

@article{kousik2020bridging,
  title={\href{https://doi.org/10.1177/0278364920943266}{Bridging the gap between safety and real-time performance in receding-horizon trajectory design for mobile robots}},
  author={Kousik, Shreyas and Vaskov, Sean and Bu, Fan and Johnson-Roberson, Matthew and Vasudevan, Ram},
  journal={The International Journal of Robotics Research},
  volume={39},
  number={12},
  pages={1419--1469},
  year={2020},
  publisher={SAGE Publications Sage UK: London, England}
}

@article{elbanhawi2014sampling,
  title={\href{https://doi.org/10.1109/ACCESS.2014.2302442}{Sampling-based robot motion planning: A review}},
  author={Elbanhawi, Mohamed and Simic, Milan},
  journal={Ieee access},
  volume={2},
  pages={56--77},
  year={2014},
  publisher={IEEE}
}

@inproceedings{tran2020nnv,
  title={\href{https://doi.org/10.1007/978-3-030-53288-8_1}{NNV: the neural network verification tool for deep neural networks and learning-enabled cyber-physical systems}},
  author={Tran, Hoang-Dung and Yang, Xiaodong and Manzanas Lopez, Diego and Musau, Patrick and Nguyen, Luan Viet and Xiang, Weiming and Bak, Stanley and Johnson, Taylor T},
  booktitle={International Conference on Computer Aided Verification},
  pages={3--17},
  year={2020},
  organization={Springer}
}

@inproceedings{tran2020verification,
  title={\href{https://doi.org/10.1007/978-3-030-53288-8_2}{Verification of deep convolutional neural networks using imagestars}},
  author={Tran, Hoang-Dung and Bak, Stanley and Xiang, Weiming and Johnson, Taylor T},
  booktitle={International conference on computer aided verification},
  pages={18--42},
  year={2020},
  organization={Springer}
}

@article{chung2021constrained,
  title={\href{https://arxiv.org/abs/2107.07696}{Constrained feedforward neural network training via reachability analysis}},
  author={Chung, Long Kiu and Dai, Adam and Knowles, Derek and Kousik, Shreyas and Gao, Grace X},
  journal={arXiv preprint arXiv:2107.07696},
  year={2021}
}

@phdthesis{althoff2010reachability,
  title={\href{https://mediatum.ub.tum.de/doc/1287517/document.pdf}{Reachability analysis and its application to the safety assessment of autonomous cars}},
  author={Althoff, Matthias},
  year={2010},
  school={Technische Universit{\"a}t M{\"u}nchen}
}

@article{everett2020robustness,
  title={\href{https://doi.org/10.1109/LCSYS.2020.3045323}{Robustness analysis of neural networks via efficient partitioning with applications in control systems}},
  author={Everett, Michael and Habibi, Golnaz and How, Jonathan P},
  journal={IEEE Control Systems Letters},
  volume={5},
  number={6},
  pages={2114--2119},
  year={2020},
  publisher={IEEE}
}

@inproceedings{achiam2017constrained,
  title={\href{https://proceedings.mlr.press/v70/achiam17a}{Constrained policy optimization}},
  author={Achiam, Joshua and Held, David and Tamar, Aviv and Abbeel, Pieter},
  booktitle={International conference on machine learning},
  pages={22--31},
  year={2017},
  organization={PMLR}
}

@article{schulman2017proximal,
  title={\href{https://arxiv.org/abs/1707.06347}{Proximal policy optimization algorithms}},
  author={Schulman, John and Wolski, Filip and Dhariwal, Prafulla and Radford, Alec and Klimov, Oleg},
  journal={arXiv preprint arXiv:1707.06347},
  year={2017}
}

@inproceedings{schulman2015trust,
  title={\href{https://proceedings.mlr.press/v37/schulman15.html}{Trust region policy optimization}},
  author={Schulman, John and Levine, Sergey and Abbeel, Pieter and Jordan, Michael and Moritz, Philipp},
  booktitle={International conference on machine learning},
  pages={1889--1897},
  year={2015},
  organization={PMLR}
}

@article{qin2022sablas,
  title={\href{https://doi.org/10.1109/LRA.2022.3142743}{Sablas: Learning safe control for black-box dynamical systems}},
  author={Qin, Zengyi and Sun, Dawei and Fan, Chuchu},
  journal={IEEE Robotics and Automation Letters},
  volume={7},
  number={2},
  pages={1928--1935},
  year={2022},
  publisher={IEEE}
}

@book{freeman2008robust,
  title={\href{https://books.google.com/books?id=ThHBE9xABUAC}{Robust nonlinear control design: state-space and Lyapunov techniques}},
  author={Freeman, Randy and Kokotovic, Petar V},
  year={2008},
  publisher={Springer Science \& Business Media}
}

@article{qin2021learning,
  title={\href{https://arxiv.org/abs/2101.05436}{Learning safe multi-agent control with decentralized neural barrier certificates}},
  author={Qin, Zengyi and Zhang, Kaiqing and Chen, Yuxiao and Chen, Jingkai and Fan, Chuchu},
  journal={arXiv preprint arXiv:2101.05436},
  year={2021}
}

@inproceedings{choi2021robust,
  title={\href{https://doi.org/10.1109/CDC45484.2021.9683085}{Robust control barrier--value functions for safety-critical control}},
  author={Choi, Jason J and Lee, Donggun and Sreenath, Koushil and Tomlin, Claire J and Herbert, Sylvia L},
  booktitle={2021 60th IEEE Conference on Decision and Control (CDC)},
  pages={6814--6821},
  year={2021},
  organization={IEEE}
}

@inproceedings{ames2019control,
  title={\href{https://doi.org/10.23919/ECC.2019.8796030}{Control barrier functions: Theory and applications}},
  author={Ames, Aaron D and Coogan, Samuel and Egerstedt, Magnus and Notomista, Gennaro and Sreenath, Koushil and Tabuada, Paulo},
  booktitle={2019 18th European control conference (ECC)},
  pages={3420--3431},
  year={2019},
  organization={IEEE}
}

@article{dai2021lyapunov,
  title={\href{https://arxiv.org/abs/2109.14152}{Lyapunov-stable neural-network control}},
  author={Dai, Hongkai and Landry, Benoit and Yang, Lujie and Pavone, Marco and Tedrake, Russ},
  journal={arXiv preprint arXiv:2109.14152},
  year={2021}
}

@article{zhang2023data,
  title={\href{https://doi.org/10.1109/TNNLS.2023.3339885}{Data-Driven Safe Policy Optimization for Black-Box Dynamical Systems With Temporal Logic Specifications}},
  author={Zhang, Chenlin and Lin, Shijun and Wang, Hao and Chen, Ziyang and Wang, Shaochen and Kan, Zhen},
  journal={IEEE Transactions on Neural Networks and Learning Systems},
  year={2023},
  publisher={IEEE}
}

@inproceedings{wu2020r3t,
  title={\href{https://doi.org/10.1109/ICRA40945.2020.9196802}{R3T: Rapidly-exploring random reachable set tree for optimal kinodynamic planning of nonlinear hybrid systems}},
  author={Wu, Albert and Sadraddini, Sadra and Tedrake, Russ},
  booktitle={2020 IEEE International Conference on Robotics and Automation (ICRA)},
  pages={4245--4251},
  year={2020},
  organization={IEEE}
}

@article{hornik1989multilayer,
  title={\href{https://doi.org/10.1016/0893-6080(89)90020-8}{Multilayer feedforward networks are universal approximators}},
  author={Hornik, Kurt and Stinchcombe, Maxwell and White, Halbert},
  journal={Neural networks},
  volume={2},
  number={5},
  pages={359--366},
  year={1989},
  publisher={Elsevier}
}

@inproceedings{althoff2015introduction,
  title={\href{https://mediatum.ub.tum.de/doc/1280439/document.pdf}{An introduction to CORA 2015}},
  author={Althoff, Matthias},
  booktitle={Proc. of the workshop on applied verification for continuous and hybrid systems},
  pages={120--151},
  year={2015}
}

@article{montufar2014number,
  title={\href{https://proceedings.neurips.cc/paper/2014/hash/109d2dd3608f669ca17920c511c2a41e-Abstract.html}{On the number of linear regions of deep neural networks}},
  author={Montufar, Guido F and Pascanu, Razvan and Cho, Kyunghyun and Bengio, Yoshua},
  journal={Advances in neural information processing systems},
  volume={27},
  year={2014}
}

@article{hanin2019universal,
  title={\href{https://www.mdpi.com/2227-7390/7/10/992}{Universal function approximation by deep neural nets with bounded width and relu activations}},
  author={Hanin, Boris},
  journal={Mathematics},
  volume={7},
  number={10},
  pages={992},
  year={2019},
  publisher={MDPI}
}

@article{arora2016understanding,
  title={\href{https://arxiv.org/abs/1611.01491}{Understanding deep neural networks with rectified linear units}},
  author={Arora, Raman and Basu, Amitabh and Mianjy, Poorya and Mukherjee, Anirbit},
  journal={arXiv preprint arXiv:1611.01491},
  year={2016}
}

@article{batista2024deep,
  title={\href{https://arxiv.org/abs/2407.08263}{A Deep Reinforcement Learning Framework and Methodology for Reducing the Sim-to-Real Gap in ASV Navigation}},
  author={Batista, Luis FW and Ro, Junghwan and Richard, Antoine and Schroepfer, Pete and Hutchinson, Seth and Pradalier, Cedric},
  journal={arXiv preprint arXiv:2407.08263},
  year={2024}
}

@article{makoviychuk2021isaac,
  title={\href{https://arxiv.org/abs/2108.10470}{Isaac gym: High performance gpu-based physics simulation for robot learning}},
  author={Makoviychuk, Viktor and Wawrzyniak, Lukasz and Guo, Yunrong and Lu, Michelle and Storey, Kier and Macklin, Miles and Hoeller, David and Rudin, Nikita and Allshire, Arthur and Handa, Ankur and others},
  journal={arXiv preprint arXiv:2108.10470},
  year={2021}
}

@article{kousik2019technical,
  title={\href{https://arxiv.org/abs/1904.05728}{Technical report: Safe, aggressive quadrotor flight via reachability-based trajectory design}},
  author={Kousik, Shreyas and Holmes, Patrick and Vasudevan, Ramanarayan},
  journal={arXiv preprint arXiv:1904.05728},
  year={2019}
}

@article{orthey2024multilevel,
  title={\href{https://doi.org/10.1177/02783649231209337}{Multilevel motion planning: A fiber bundle formulation}},
  author={Orthey, Andreas and Akbar, Sohaib and Toussaint, Marc},
  journal={The international journal of robotics research},
  volume={43},
  number={1},
  pages={3--33},
  year={2024},
  publisher={SAGE Publications Sage UK: London, England}
}

@article{liu2023simultaneous,
  title={\href{https://doi.org/10.1109/TCST.2023.3283446}{Simultaneous planning and execution for quadrotors flying through a narrow gap under disturbance}},
  author={Liu, Zhou and Cai, Lilong},
  journal={IEEE Transactions on Control Systems Technology},
  volume={31},
  number={6},
  pages={2644--2659},
  year={2023},
  publisher={IEEE}
}

@article{yu2023gaussian,
  title={\href{https://doi.org/10.1109/LRA.2023.3256134}{A Gaussian variational inference approach to motion planning}},
  author={Yu, Hongzhe and Chen, Yongxin},
  journal={IEEE Robotics and Automation Letters},
  volume={8},
  number={5},
  pages={2518--2525},
  year={2023},
  publisher={IEEE}
}

@inproceedings{berenson2009manipulation,
  title={\href{https://doi.org/10.1109/ROBOT.2009.5152399}{Manipulation planning on constraint manifolds}},
  author={Berenson, Dmitry and Srinivasa, Siddhartha S and Ferguson, Dave and Kuffner, James J},
  booktitle={2009 IEEE international conference on robotics and automation},
  pages={625--632},
  year={2009},
  organization={IEEE}
}

@article{liu2021algorithms,
  title={Algorithms for verifying deep neural networks},
  author={Liu, Changliu and Arnon, Tomer and Lazarus, Christopher and Strong, Christopher and Barrett, Clark and Kochenderfer, Mykel J and others},
  journal={Foundations and Trends{\textregistered} in Optimization},
  volume={4},
  number={3-4},
  pages={244--404},
  year={2021},
  publisher={Now Publishers, Inc.}
}

@article{o2019f1,
  title={\href{https://arxiv.org/abs/1901.08567}{F1/10: An open-source autonomous cyber-physical platform}},
  author={O'Kelly, Matthew and Sukhil, Varundev and Abbas, Houssam and Harkins, Jack and Kao, Chris and Pant, Yash Vardhan and Mangharam, Rahul and Agarwal, Dipshil and Behl, Madhur and Burgio, Paolo and others},
  journal={arXiv preprint arXiv:1901.08567},
  year={2019}
}

@article{forets2021lazysets,
  title={\href{https://arxiv.org/abs/2110.01711}{LazySets. jl: Scalable symbolic-numeric set computations}},
  author={Forets, Marcelo and Schilling, Christian},
  journal={arXiv preprint arXiv:2110.01711},
  year={2021}
}

@article{mamakoukas2020memory,
  title={\href{https://proceedings.neurips.cc/paper/2020/hash/9cd78264cf2cd821ba651485c111a29a-Abstract.html}{Memory-efficient learning of stable linear dynamical systems for prediction and control}},
  author={Mamakoukas, Giorgos and Xherija, Orest and Murphey, Todd},
  journal={Advances in Neural Information Processing Systems},
  volume={33},
  pages={13527--13538},
  year={2020}
}

@article{miller2015ergodic,
  title={\href{https://doi.org/10.1109/TRO.2015.2500441}{Ergodic exploration of distributed information}},
  author={Miller, Lauren M and Silverman, Yonatan and MacIver, Malcolm A and Murphey, Todd D},
  journal={IEEE Transactions on Robotics},
  volume={32},
  number={1},
  pages={36--52},
  year={2015},
  publisher={IEEE}
}
\appendix
\subsection{H-Polytope Operations}
Consider a pair of H-polytopes $\Poly_1 = \hpoly(\Acon_1, \bcon_1)$ and $\Poly_2 = \hpoly(\Acon_2, \bcon_2)$.
Their intersection $\cap$ is an H-polytope:
\begin{align}
\begin{split}\label{eq:intersection}
    \Poly_1 \cap \Poly_2 &= \left\{\xv\ |\ \Acon_1\xv\leq\bcon_1, \Acon_2\xv\leq\bcon_2\right\},\\
    &= \hpoly\left(\begin{bmatrix}
        \Acon_1 \\
        \Acon_2
    \end{bmatrix}, \begin{bmatrix}
        \bcon_1 \\
        \bcon_2
    \end{bmatrix}\right).
\end{split}
\end{align}

Their Cartesian product $\times$ is also an H-polytope \cite{herceg2013multi}:
\begin{subequations}
\begin{align}
    \Poly_1 \times \Poly_2 &= \left\{\begin{bmatrix}
        \xv\\\yv
    \end{bmatrix}\ \Bigr|\ \xv \in \Poly_1, \yv \in \Poly_2\right\},\\
    &= \hpoly\left(\begin{bmatrix}
        \Acon_1 & \zeros\\
        \zeros & \Acon_2
    \end{bmatrix}, \begin{bmatrix}
        \bcon_1\\\bcon_2
    \end{bmatrix}\right).
\end{align}
\end{subequations}

The Minkowski sum $\oplus$ is defined as \cite{herceg2013multi}:
\begin{align}\label{eq:reg_minkowski_sum}
    \Poly_1 \oplus \Poly_2 &= \left\{\xv + \yv\ |\ \xv \in \Poly_1, \yv \in \Poly_2\right\}.
\end{align}

Similarly, the Pontryagin Difference $\ominus$ is defined as \cite{herceg2013multi}:
\begin{align}
    \Poly_1 \ominus \Poly_2 &= \left\{\xv \in \Poly_1\ |\ \xv + \yv \in \Poly_1\ \forall\ \yv \in \Poly_2\right\}.
\end{align}

\subsection{AH-Polytope Operations}

Consider a pair of AH-polytopes $\Qoly_1 = \ahpoly(\Acon_1, \bcon_1, \Ccon_1, \dcon_1)$, $\Qoly_2 = \ahpoly(\Acon_1, \bcon_1, \Ccon_1, \dcon_1)$.
The intersection $\cap$ of two AH-polytopes is an AH-polytope \cite{sadraddini2019linear}:
\begin{subequations}
\begin{align}
    \Qoly_1 \cap \Qoly_2 &=\left\{\xv\ |\ \xv \in \Qoly_1, \xv \in \Qoly_2 \right\},\\
    &=\ahpoly\left(\begin{bmatrix}
        \Acon_1 & \zeros\\
        \zeros & \Acon_2\\
        \Ccon_1 & -\Ccon_2\\
        -\Ccon_1 & \Ccon_2
    \end{bmatrix}, \begin{bmatrix}
        \bcon_1\\
        \bcon_2\\
        \dcon_2-\dcon_1\\
        \dcon_1-\dcon_2
    \end{bmatrix}, [\Ccon_1, \zeros], \dcon_1\right).
\end{align}
\end{subequations}

The convex hull of two AH-polytopes is an AH-polytope \cite{sadraddini2019linear}:
\begin{subequations}
\begin{align}
    &\conv{\Qoly_1, \Qoly_2} \notag\\
    =&\left\{\xv + \gams (\yv - \xv)\ |\ 0 \leq \gams \leq 1, \xv, \yv \in \Qoly_1 \cup \Qoly_2\right\},\\
    =&\ahpoly\left(\begin{bmatrix}
        \Acon_1 & \zeros & -\bcon_1\\
        \zeros & \Acon_2 & \bcon_2\\
        \zeros & \zeros & 1\\
        \zeros & \zeros & -1
    \end{bmatrix}, \begin{bmatrix}
        \zeros\\
        \bcon_2\\
        1\\
        0
    \end{bmatrix}, [\Ccon_1, \Ccon_2, \dcon_1-\dcon_2], \dcon_2\right).
\end{align}
\end{subequations}

Finally, the projection of an $\ndim$-dimensional H-polytope $\hpoly(\Acon, \bcon)$ onto its first $\mdim$ dimensions, $\mdim \leq \ndim$, is an $\mdim$-dimensional AH-polytope:
\begin{subequations}
\begin{align}
    \proj[\mdim]{\hpoly(\Acon, \bcon)} &= \left\{
        \begin{bmatrix}
        \eye_{\mdim}, \ 
        \zeros
    \end{bmatrix}\xv\ |\
    \xv\in\hpoly(\Acon, \bcon)\right\},\\
    &= \ahpoly\left(\Acon, \bcon, \begin{bmatrix}
        \eye_{\mdim}, \ 
        \zeros
    \end{bmatrix}, \zeros\right).
\end{align}
\end{subequations}

\subsection{Detailed Proof of Theorem 6}

By Lemma \ref{lem:traj_to_real}, it is sufficient to show that, if there is a point $[{\planv_0}\tp, \paramv\tp]\tp$ in $\Rbrs$ that collides with $\tilde{\Obs}_{\ts, \js}$ between time $\ts$ and $\ts + \Delta\ts$, then $\planv_0$ must be in the convex hull between $\Obsbrs_{\ts, \js}$ and $\Obsbrs_{\ts+\Delta\ts, \js}$.
    Mathematically, if $\exists [{\planv_0}\tp, \paramv\tp]\tp \in \Rbrs, \alpha \in [0, 1]$ such that 
    \begin{align}
        \hat{\planv}(\ts) + \alpha(\hat{\planv}(\ts + \Delta\ts) - \hat{\planv}(\ts)) \in \tilde{\Obs}_{\ts, \js},
    \end{align}
    then $\exists \beta\in[0, 1], \planv_1\in\Obsbrs_{\ts, \js}, \planv_2\in\Obsbrs_{\ts+\Delta\ts, \js}$ such that
    \begin{align}
        \planv_0 &= \planv_1 + \beta(\planv_2 - \planv_1).
    \end{align}

    From Assumption \ref{ass:eti} and \eqref{eq:pwa_map}, we have
    \begin{subequations}
    \begin{align}
        \hat{\planv}(\ts) =& \planv_0 + (\Ccon_{\sap_\is, \ts})_{1:\nplan, (\nplan+1):\nparam}\paramv + (\dcon_{\sap_\is, \ts})_{(\nplan+1):\nparam},\notag\\
        :=& \planv_0 + \Ccon_\ts \paramv + \dcon_\ts,\\
        \hat{\planv}(\ts + \Delta\ts) =& \planv_0 + (\Ccon_{\sap_\is, \ts + \Delta\ts})_{1:\nplan, (\nplan+1):\nparam}\paramv + (\dcon_{\sap_\is, \ts + \Delta\ts})_{(\nplan+1):\nparam},\notag\\
        :=& \planv_0 + \Ccon_{\ts + \Delta\ts} \paramv + \dcon_{\ts + \Delta\ts},
    \end{align}
    \end{subequations}
    for all $[{\planv_0}\tp, \paramv\tp]\tp \in \Rbrs$.

    Thus, from translation invariance, we can always find some $\planv_1, \planv_2$ such that $\planv_1 + \Ccon_\ts \paramv + \dcon_\ts \in \tilde{\Obs}_{\ts, \js}$ and $\planv_2 + \Ccon_{\ts + \Delta\ts} \paramv + \dcon_{\ts + \Delta\ts} \in \tilde{\Obs}_{\ts, \js}$, which implies $\planv_1\in\Obsbrs_{\ts, \js}$ and $\planv_2\in\Obsbrs_{\ts+\Delta\ts, \js}$.
    We also have
    \begin{align}
        &\hat{\planv}(\ts) + \alpha(\hat{\planv}(\ts + \Delta\ts) - \hat{\planv}(\ts))\notag\\
        =& \planv_0 + \Ccon_\ts \paramv + \dcon_\ts + \alpha((\Ccon_{\ts + \Delta\ts} - \Ccon_{\ts})\paramv + \dcon_{\ts + \Delta\ts} - \dcon_{\ts}),\notag\\
        :=& \planv_0 + \Ccon_\ts \paramv + \dcon_\ts + \alpha\hat{\planv}_d.
    \end{align}

    Let $\planv_1 + \Ccon_\ts \paramv + \dcon_\ts = \planv_2 + \Ccon_{\ts + \Delta\ts} \paramv + \dcon_{\ts + \Delta\ts} = \planv_0 + \Ccon_\ts \paramv + \dcon_\ts + \alpha\hat{\planv}_d$.
    Then $\planv_1 = \planv_0 + \alpha\hat{\planv}_d$, $\planv_2 = \planv_0 + (\alpha-1)\hat{\planv}_d$.
    Therefore
    \begin{align}
        \planv_1 + \beta(\planv_2 - \planv_1) &= \planv_0 + \alpha\hat{\planv}_d - \beta\hat{\planv}_d,
    \end{align}
    which is equal to $\planv_0$ when $\beta = \alpha \in [0, 1]$.

\end{document}